%%%%%%%% ICML 2022 EXAMPLE LATEX SUBMISSION FILE %%%%%%%%%%%%%%%%%

\documentclass[nohyperref]{article}

% Recommended, but optional, packages for figures and better typesetting:
\usepackage{microtype}
\usepackage{graphicx}
\usepackage{subcaption}     % RG: subfigure est outdated
\usepackage{booktabs} % for professional tables

% hyperref makes hyperlinks in the resulting PDF.
% If your build breaks (sometimes temporarily if a hyperlink spans a page)
% please comment out the following usepackage line and replace
% \usepackage{icml2022} with \usepackage[nohyperref]{icml2022} above.
\usepackage{hyperref}

% Attempt to make hyperref and algorithmic work together better:

% Use the following line for the initial blind version submitted for review:
%\usepackage{icml2022}

% If accepted, instead use the following line for the camera-ready submission:
\usepackage[accepted]{icml2022}

% For theorems and such
\usepackage{amsmath}
\usepackage{amssymb}
\usepackage{mathtools}
\usepackage{amsthm}

%\usepackage{caption}

% if you use cleveref..
\usepackage[capitalize,noabbrev]{cleveref}

%%%%%%%%%%%%%%%%%%%%%%%%%%%%%%%%
% OUR PACKAGES
%%%%%%%%%%%%%%%%%%%%%%%%%%%%%%%%

\usepackage{colortbl}
\usepackage{longtable}
\usepackage{tikz}
\usetikzlibrary{positioning}
\usetikzlibrary{shapes, shapes.misc, backgrounds}
\usetikzlibrary{calc}

%\graphicspath{{Graph/}}

\input{macros_math.tex}
\input{macros.tex}

%%%%%%%%%%%%%%%%%%%%%%%%%%%%%%%%
% THEOREMS
%%%%%%%%%%%%%%%%%%%%%%%%%%%%%%%%

% Todonotes is useful during development; simply uncomment the next line
%    and comment out the line below the next line to turn off comments
%\usepackage[disable,textsize=tiny]{todonotes}
\usepackage[textsize=tiny]{todonotes}

% The \icmltitle you define below is probably too long as a header.
% Therefore, a short form for the running title is supplied here:
\icmltitlerunning{Unimodal Mono-Partite Matching in a Bandit Setting}

\begin{document}

\commentrg[inline]{[DONE] couple ou pair ? => couple}
\commentrg[inline]{[DONE] pas d'espace avant ':', '?', '!'}
\commentrg[inline]{[DONE] arm ou recommendation ? => matching}
\commentmr[inline]{[DONE] $\tilde\ellv$ : leader}
\commentmr[inline]{[DONE] $\tilde\av$ : ordered matching}
\commentmr[inline]{[DONE] $\av$ : matching (sous entendu unordered)}
\commentmr[inline]{[DONE] Remarque : objets notés $i,j$ ou $e_i, e_j$ ? -> $i,j$}
\commentmr[inline]{[DONE] Remarque : \textit{optimum leader} vs \textit{optimum matching} -> ok si pas trop de mix}
\commentmr[inline]{[DONE] Rename $\mu_i$ with $\theta_i$}
\commentmr[inline]{TODO: Introduction, Theorical Analysis, Conclusion}
\commentmr[inline]{DONE: State of the Art, Settings, Methodologie, Algorithm v1, Experimental Results}

%\textbf{Comment faire pour réspudre le problème de layout ?}

\twocolumn[
    \icmltitle{Unimodal Mono-Partite Matching in a Bandit Setting}

    % It is OKAY to include author information, even for blind
    % submissions: the style file will automatically remove it for you
    % unless you've provided the [accepted] option to the icml2021
    % package.

    % List of affiliations: The first argument should be a (short)
    % identifier you will use later to specify author affiliations
    % Academic affiliations should list Department, University, City, Region, Country
    % Industry affiliations should list Company, City, Region, Country

    % You can specify symbols, otherwise they are numbered in order.
    % Ideally, you should not use this facility. Affiliations will be numbered
    % in order of appearance and this is the preferred way.
    \icmlsetsymbol{equal}{*}

    \begin{icmlauthorlist}
        \icmlauthor{Matthieu Rodet}{ens}
        \icmlauthor{Romaric Gaudel}{ecc}
    \end{icmlauthorlist}

    %\icmlaffiliation{ens}{ENS Rennes}
    \icmlaffiliation{ens}{ENS Rennes, F-35000 Rennes, France}
    \icmlaffiliation{ecc}{Univ Rennes, Ensai, CNRS, CREST - UMR 9194, F-35000 Rennes, France}
    \icmlcorrespondingauthor{Romaric Gaudel}{romaric.gaudel@ensai.fr}

    % You may provide any keywords that you
    % find helpful for describing your paper; these are used to populate
    % the "keywords" metadata in the PDF but will not be shown in the document
    \icmlkeywords{Online Learning to Match, Unimodal Bandit}

    \vskip 0.3in
]

% this must go after the closing bracket ] following \twocolumn[ ...

% This command actually creates the footnote in the first column
% listing the affiliations and the copyright notice.
% The command takes one argument, which is text to display at the start of the footnote.
% The \icmlEqualContribution command is standard text for equal contribution.
% Remove it (just {}) if you do not need this facility.

%\printAffiliationsAndNotice{}  % leave blank if no need to mention equal contribution
\printAffiliationsAndNotice{\icmlEqualContribution} % otherwise use the standard text.

\begin{abstract}
    % $O(\frac{L\log(L)}{\Delta}\log(T))$ -> $O(\frac{L}{\Delta}\log(T))$ -> $O(\frac{L}{\Delta tild}\log(T))$
    % monopartite.. : exists algo
    % ds papier, uni + inspi GRAB
    % => Deux nouveaux algos en O(...) et O(...)
    % D'ailleurs ça marche en vrai !

    We tackle a new emerging problem, which is finding an optimal monopartite matching in a weighted graph.
    The semi-bandit version, where a full matching is sampled at each iteration, has been addressed by \cite{ADMA}, creating an algorithm with an expected regret matching $O(\frac{L\log(L)}{\Delta}\log(T))$ with $2L$ players, $T$ iterations and a minimum reward gap $\Delta$.
    We reduce this bound in two steps. First, as in \cite{GRAB} and \cite{UniRank} we use the unimodality property of the expected reward on the appropriate graph to design an algorithm with a regret in $O(L\frac{1}{\Delta}\log(T))$.
    Secondly, we show that by moving the focus towards the main question `\emph{Is user $i$ better than user $j$?}' this regret becomes $O(L\frac{\Delta}{\Tilde{\Delta}^2}\log(T))$, where $\Tilde{\Delta} > \Delta$ derives from a better way of comparing users.
    Some experimental results finally show these theoretical results are corroborated in practice.

\end{abstract}

% nombre de pages : 8 pages (hors biblio et annexes)

\section{Introduction}

In some online games, when all servers are running games, players wait in a queue to find a freed server to start a game. As finding a game could take some time, a player may not be available when her game is found, leading to its cancellation and causing extra wait to her opponent. Thus, game developers need to find a solution to cancel as less games as possible. This problem can be formulate as a monopartite matching problem: given a set of players which are each defined by their unknown inner probability of accepting the match, developers have to propose a matching among players maximizing the number of effectively played games.

This question boils down to the longstanding problem of finding matchings in a weighted graph, i.e. a subset of edges without common vertices \cite{lovasz2009matching}. Indeed, the graph could be constituted of the players as vertices and an edge could exists between two players only if they can compete against each other, weighted by the probability of a match occurring between them. In addition to this concrete application, this problem knows various applications in operations research \cite{wheaton1990vacancy}, economics \cite{roth2004kidney} and Machine Learning \cite{mehta2012online}. We consider the online version, where the algorithm is in charge of submitting a sequence of matchings (one per iteration) and gets in return the evaluation of these submissions. This problem is part of the combinatorial semi-bandits ones \cite{cesa2012combinatorial}. 

% More specifically, we address in this paper the monopartite case of the problem, having $2L$ players. We also assume that the adjacency matrix $W\in\RR^{2L\times 2l}$ of our weighted graph is a rank-1 matrix. Therefore this matrix can be written as the product of a vector with its transpose: $W = \theta\theta^T$, with $\theta \in [0,1]^{2L}$. This assumption corresponds to the fact that when two players $i$ and $j$ are recommended to play together, player $i$ (respectively $j$) decides to participate in the game with probability $\theta_i$  (resp. $\theta_j$) and the probability of this game to be played is $\theta_i\theta_j$. Henceforth, by denoting $m$ a recommended matching, the expected number of played games is $\displaystyle \sum_{(i,j)\in m} \theta_i\theta_j$, which is maximized by matching (i) the user with highest value $\theta_i$ with the user with the second highest value, (ii) the user the third highest value with the user with the fourth highest value, and so one. Therefore, we aim at learning the order on parameters $\theta_i$ to deliver the best matching

As in \cite{Komiyama2017, GRAB, ADMA}, in this paper we focus on the setting where the matrix carrying the probability of match between two players is a rank-1 matrix. We use this setting as a toy example to first showcase the power of unimodality when facing a combinatorial bandit setting related to a ranking problem, and to show that we can go beyond current state of the art by fully embracing the main question `\emph{Is user $i$ better than user $j$?}'.

Concretely, first we handle that setting by applying the strategy we introduced in \cite{GRAB} and \cite{UniRank}: (i) we define a graph on the (ordered) matchings such that the expected reward is unimodal on this graph, and (ii) we apply a variant of OUSB \citep{Combes2014} on this graph. This again demonstrate the interest of the unimodality point of view to pull out the intrinsic complexity of a combinatorial semi-bandit setting. Indeed, by denoting $2L$ the number of players, $T$ the number of iterations, and $\Delta$ the minimum reward gap, the naive application of the combinatorial bandit literature leads to an algorithm with a $O(\frac{L^2\log^2(L)}{\Delta}\log(T))$ regret\footnote{Using by example ESCB \cite{combes2015combinatorial}.}, while \citep{ADMA} gets an almost linear regret by accounting for the strong link between rank-1 matching and ranking. In this paper, we show that the unimodality enables an exact linear regret by focusing on the minimal set of comparisons (to the price of an additional $\log\log T$ term).

Secondly, the direct application of GRAB \cite{GRAB} compares two players by comparing the expected regret of two matchings. While this strategy seems obvious, we show in this paper that it is suboptimal. Indeed, we exhibit another criterion with a gap $\tilde\Delta$ greater than the gap $\Delta$ between expected rewards, which drastically reduces the exploration budget of our algorithm and therefore of its expected regret.

This paper is divided in six sections. We start with a formal description of the addressed problem in Section~\ref{seq:settings}. Then, we introduce in Section~\ref{seq:related work} the related work attached to this problem, its extensions and its problem realm. Section~\ref{seq:methodo} is about our methodology, concerning the assumptions made and their consequences. After defining the precise context of the problem, we propose two versions of \ouralgo{} in Section~\ref{seq:algo} and we prove that they enjoy a regret linear in $L$ in Section~\ref{seq:theorical analysis}. Section~\ref{seq:xp results} finally presents numerical results against a state-of-the-art algorithm.

\section{Setting}
\label{seq:settings}
%\subsection{Principle of the Problem}
% applis
% setting bandit-manchots => des items, un bras, la récompense, optimiser le regret

We consider the problem of delivering monopartite matchings in a weighted graph, and give our settings. For any integer $n$, let $[n]$ be the set $\{1, \dots, n\}$. An instance of our problem is a couple $(L, (\rho_{i,j})_{i,j \in [2L]^2})$ where $2L$ is the number of elements to be matched, and for all elements $i,j \in [2L]^2$, $\rho_{i,j}$ is the probability of success associated to the elements $i,j$. A success could be, if $i$ and $j$ are players, that a game occurs, meaning that both player $i$ and player $j$ chose to play the game. Moreover, we assume that there exists a vector $\theta \in \RR^{2L}$ such that $\rho_{i,j} = \theta_i\theta_j$, meaning that $\rho$ if of rank 1. For any element $i$, $\theta_i$ represent the success rate asssociated to player $i$ alone. Finally, without loss of generality, we add the hypothesis that $\theta_1 \geq \dots \geq \theta_{2L}$ to simplify the notations in the rest of the paper.

A matching algorithm is only aware of $L$, and has to deliver $T$ consecutive matchings. A matching is a partition $\av$ of $[2L]$ made of couples, and we note $\av(t)$ the matching returned by the algorithm at each iteration $t$. We note that couples are not ordered, which mean that $\forall i, j \in [2L], i \neq j, \{i, j\} = \{j, i\}$. We denote $\Ac$ the set of all partitions of $[2L]$ made of couples, which correspond to the set of arms of the bandit setting. Formally, $\Ac$ is contains every $\av$ such that:
$$
    %\av \in \Ac \Longleftrightarrow
    \begin{cases}
        \displaystyle \bigcup_{\{i,j\} \in \av} \{i,j\} = [2L] \\
        \forall c \in a, |c| = 2                               \\
        |a| = L
    \end{cases}
$$

We name \textit{optimum matching} the matching $\av^*\defeq \{\{2k-1, 2k\} : k \in [L]\}$. 

At iteration $t$, after recommending the matching $\av(t)$, the algorithm receives a semi-bandit feedback $\cv(t)$ which contains the result of a Bernoulli associated to each couple $\{i,j\} \in \av(t)$, noted $\cv_{i, j}(t)$, of parameter $\rho_{i,j}$. We address the stochastic case, meaning that the way of generating $\cv(t)$ is fixed over iterations and will not change. We focus on the \emph{expected reward}
$$\EE\left[\sum_{(i,j)\in\av(t)} \cv_{i,j}(t)\right] = \sum_{\{i,j\} \in \av(t)} \rho_{i,j}.$$
Due to the rank-1 property, $\av^*$ maximizes the expected reward, as expressed by Lemma~\ref{lem:max-exp-reward}

\begin{lemma}[Maximum Expected Reward]\label{lem:max-exp-reward}
    $$
        \sum_{\{i, j\} \in \av^*} \rho_{i,j} = \max_{\av \in \mathcal{M}} \sum_{\{i,j\} \in \av} \rho_{i,j}
    $$
\end{lemma}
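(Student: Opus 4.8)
The plan is to prove the statement by a rearrangement/exchange argument that exploits the rank-1 structure $\rho_{i,j}=\theta_i\theta_j$ together with the ordering $\theta_1\geq\dots\geq\theta_{2L}$. The core observation is a pairwise swap inequality: given any two couples $\{i,j\}$ and $\{k,l\}$ appearing in a matching, among the three ways of re-pairing these four elements the reward is maximized by putting the two largest-$\theta$ elements together and the two smallest together. Concretely, for values $w\geq x\geq y\geq z$ one checks $wx+yz\geq wy+xz\geq wz+xy$, where each inequality reduces to a product of two non-negative differences, e.g. $wx+yz-(wy+xz)=(w-z)(x-y)\geq 0$ and $wy+xz-(wz+xy)=(w-x)(y-z)\geq 0$. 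This elementary computation is the only real work, and it uses only the ordering of the $\theta_i$, not their sign.

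First I would turn this local fact into global optimality by induction on $L$. In the inductive step I claim some optimal matching pairs element $1$ with element $2$: take any matching in which $1$ is matched to some $j$ and $2$ to some $k$ with $\{1,j\}\neq\{1,2\}$; then necessarily $j\geq 3$ and $k\geq 3$, so $\theta_1\geq\theta_k$ and $\theta_2\geq\theta_j$. Replacing the couples $\{1,j\},\{2,k\}$ by $\{1,2\},\{j,k\}$ changes the expected reward by $(\theta_1-\theta_k)(\theta_2-\theta_j)\geq 0$, hence does not decrease it. Having fixed the couple $\{1,2\}$, the remaining elements $\{3,\dots,2L\}$ inherit exactly the same structure (sorted $\theta$ values and rank-1 rewards), so the induction hypothesis yields that their best matching is $\{\{3,4\},\dots,\{2L-1,2L\}\}$; together with $\{1,2\}$ this is precisely $\av^*$. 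The base case $L=1$ is immediate since there is a single matching.

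I expect the main obstacle to be bookkeeping rather than conceptual: one must state the exchange step so that it genuinely covers every configuration, in particular the degenerate goal state where $2$ is already matched to $1$ and the situation where the two displayed couples share an index. A clean way to sidestep repeated case analysis is to run the argument entirely through the global rearrangement inequality, observing that any matching can be transformed into $\av^*$ by a finite sequence of pairwise uncrossings, each of which does not decrease the reward, so $\av^*$ attains the maximum. I would also note that equality in each swap holds exactly when the relevant $\theta$ differences vanish, which explains why $\av^*$ is a maximizer but need not be the unique one when there are ties.
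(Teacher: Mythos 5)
Your proof is correct. Note, however, that the paper states Lemma~\ref{lem:max-exp-reward} without any proof (it is presented as a direct consequence of the rank-1 structure), so there is no paper proof to compare against; your induction-plus-exchange argument fills that gap. Your route is entirely consistent with the paper's machinery: the key identity you use, namely that re-pairing $\{1,j\},\{2,k\}$ into $\{1,2\},\{j,k\}$ changes the reward by $(\theta_1-\theta_k)(\theta_2-\theta_j)\geq 0$, is exactly the swap identity the paper itself invokes later, in the proof of the relaxed unimodality lemma, where $\mu_{\tilde\av} - \mu_{\neigpermut{\tilde\av}{i'}{j}} = (\theta_i - \theta_{j'})(\theta_{i'} - \theta_j)$. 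Two small remarks. First, your observation that only the ordering and not the sign of the $\theta_i$ matters is a harmless strengthening: since the $\rho_{i,j}=\theta_i\theta_j$ are probabilities, all $\theta_i$ necessarily share a sign anyway. Second, your closing alternative (transforming an arbitrary matching into $\av^*$ by a sequence of uncrossings) would additionally require an argument that the sequence terminates at $\av^*$ (e.g.\ via a monotone potential such as the number of inversions), so the inductive version you develop first is the one to keep as the actual proof; it is complete as written, including the degenerate case where $1$ is already paired with $2$.
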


We address the standard bandit objective, we minimize the cumulative expected regret which value is the sum over iterations of the difference between the maximum expected reward and the expected reward of the proposed matching:
$$
    R(T) = \sum_{t=1}^{T} \sum_{\{i, j\} \in \av^*} \rho_{i,j} - \sum_{\{i, j\} \in \av(t)} \rho_{i,j}.
$$

\section{Related Work}
\label{seq:related work}

% Adaptative Matching

% cf. related-workd de SAM et de GRAB

The  problem addressed in this paper is a combinatorial bandit problem with semi-bandit feedback \cite{cesa2012combinatorial}. Combinatorial bandits have recently known several improvements \cite{combes2015combinatorial,cuvelier2021statistically,degenne2016combinatorial,perrault2020statistical,wang2018thompson}, but these improvements do not account for a strong property of our setting. 
When applied to our, the state of the art combinatorial bandit algorithm, ESCB \cite{combes2015combinatorial}, suffers a regret of $\OO(\frac{L^2\log^2(L) \log(T)}{\Delta})$, $\Delta$ being the expected reward gap between an optimal matching and the best sub-optimal matching. A recent contribution to both monopatite and bipartite cases \cite{ADMA} has, among other things, lowered the monopartite regret bound to a scale of $\OO(\frac{L \log(L)}{\Delta}\log(T))$, through the algorithm \texttt{simple adaptative matching} (SAM). The principle of this algorithm is, knowing $L$ and $T$, to initially group all players together and, at each iteration, divide the current groups in two if the matching algorithm is certain with high probability that every players in one group are better than the other group players. The criterion used to split a group in two parts requires the knowledge of both the total number of players $2L$ and the horizon $T$.

    The bipartite case has also been recently addressed \cite{GRAB}, proposing the algorithm \emph{parametric Graph for unimodal RAnking Bandit} (GRAB) and lowering the expected regret bound to $\OO(\frac{L}{\Delta}\log(T))$. The principle of GRAB is based on a graph w.r.t. which the expected reward is unimodal. At each iteration, GRAB elects a node of this graph as leader and choose from this leader the proposed recommendation. By acquiring knowledge from semi-bandit feedback, GRAB is able to search through nodes and to converge to the optimal recommendation. We show in this paper that we may adapt GRAB to the monopartite case and therefore get a lower bound smaller than the one proposed by SAM.
    Moreover, we show that by using a new criteria to select the played matching, we increase the value of $\Delta$ from $(\theta_i - \theta_{i'})(\theta_j - \theta_{j'})$ for SAM and the original GRAB to $\theta_i(\theta_{i'} - \theta_{j'})$ and $\theta_j(\theta_{i'} - \theta_{j'})$, where $i,i',j,j'$ are players.

    \subsection{Unimodality}\label{sec:SOTA_unimodality}
    % def et regret en \gamma (cf. OSUB)

    % on peut le faire aussi ? (par ce que GRAB a réussi)
    % on fait à peu près pareil (V2 avec aurte critère d'eploraiton => faire croitre \Delta (et donc réduire regret)
    Our approach builds upon the unimodality of the expected reward, so let us briefly remind corresponding main results. 

    The \textit{unimodality} has been defined in \cite{Cope2009,Yu2011} and refined in \cite{Combes2014}. Let $G = (V,E)$ be an undirected graph whose vertices are the arms of the bandit, noted $V = \{1,\dots,K\}$, and edges caries a partial order on expected rewards. We assume that there exists a unique arm $k^* \in V$ called optimum such that its expected reward $\mu_{k^*}$ is maximal. Finally, $\forall k \in V, k\neq k^*$ there exists a path $p = (k_1 = k, k_2, \dots, k_{m-1}, k_m = k^*)$ of length $m \in [m]$ depending of $k$, with $(k_i)_{i \in [m]} \in V^{[m]}$, such that $\forall i \in [m-1], (k_i,k_{i+1}) \in E$ and $\mu_i < \mu_{i+1}$. The bandit algorithm is aware of $G$ but ignores the partial order induced by the edges. By relying on $G$, the algorithm is able of browsing efficiently among the arms and to converge to the optimum.

    Both GRAB \cite{GRAB} and OSUB \cite{Combes2014} are designed to benefit from the unimodality. At each iteration, they recommend either the best arm sofar (a.ka. \emph{leader}) or an arm in its neighborhood. The restriction of the exploration to this neighborhood induces a regret which scales as $\OO(\frac{\gamma}{\Delta}\log T)$, with $\gamma$ the maximum degree of $G$, instead of $\OO(\frac{|V|}{\Delta}\log T)$ with independent arms.

    \section{Methodology}
    \label{seq:methodo}
    % hyp : ordre partiellement strict
    % implications (à peu près unimodalité) (attention, lemme sur les ordered matching, pas sur les matching de base, définir "Set")

    In this section we add an assumption to the setting and clarify its consequences. While this assumption is not required by \ouralgo{} to get a logarithmic regret, our theoretical analysis needs it to get the linear-in-$L$ regret. Note that \cite{ADMA} assumes the same property in its analysis. Whether this assumption may be discarded or not remains an opened question.

    Our matching algorithm is based on an undirected graph provided with useful properties. We propose in the following a definition of this graph and its properties.

    Each vertex is a matching provided with an order over its couples. We call them ordered matchings, note them $\tilde\av$, and we are now able to refer to the $k$-th couple as $\tilde\av_k$.

    An edge exists between two nodes $\tilde\av$ and $\tilde\av'$ if, and only if, the first can be obtained by switching one element of two successive couples. We note this property $\tilde\av' = \neigpermut{\tilde\av}{i}{j}$, $i,j$ being the two elements switched, and we formally define it as\\
$\exists k \in [L-1], \text{ and let } \{i,i'\} = \tilde\av_k, \{j,j'\} = \tilde\av_{k+1},$
    $$
        \forall n \in [L],
        \begin{cases}
            \tilde\av'_n = \{j,i'\}    & \text{, if } n = k   \\
            \tilde\av'_n = \{i,j'\}    & \text{, if } n = k+1 \\
            \tilde\av'_n = \tilde\av_n & \text{, otherwise}
        \end{cases}
    $$
%    We denote $\gamma$ as the degree of this. Note that $\gamma = 4L-4$ because, for each $k$, there are four possible permutation between the two considered couples. Later, we will expose a refinement that permits us to divide this value by a factor $2$.

    We also add a method to switch from ordered matching to matching, that we call $\NtR$, defined for any given ordered matching $\tilde\av$ as
    $$
        \NtR(\tilde\av) \stackrel{def}{=} \{\tilde\av_k : k \in [L]\},
    $$
    the matching constituted of the elements of $\tilde\av$.

    We introduce to the problem setting a new assumption of \textit{inter-pair strict order}, used later in the proof of Lemma~\ref{lem:opt-uni} and stating that
    \begin{assumption}[inter-pair strict order]\label{hyp:strict}
        $\forall k,k' \in [L]^2,\\k<k' \implies \min\left(\theta_{2k-1}, \theta_{2k}\right) > \max\left(\theta_{2k'-1}, \theta_{2k'}\right)$
    \end{assumption}

    We define the property $\pi$ over an ordered matching $\tilde\av$ as
    $$
        \forall k \in [L-1], \{i,j\}=\av_k, \{i',j'\}=\av_{k+1}, \rho_{i,j} \geq \rho_{i',j'}.
    $$
    This property, noted $\pi(\tilde\av)$, ensures that an order over $\rho$ is respected within $\tilde\av$.

    From this property, we construct the optimum ordered matching, called \textit{optimum leader} and noted $\tilde\av^*$, as
    \begin{align}
         &  \NtR(\tilde\av^*) = \av^* %\label{prop:NtR-opt}
         &&  \pi(\tilde\av^*) \label{prop:opt}
    \end{align}

    \begin{lemma}[\textit{optimum leader} uniqueness]\label{lem:opt-uni}
        Under Assumption~\ref{hyp:strict}, $\tilde\av^*$ is unique.
    \end{lemma}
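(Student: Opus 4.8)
The plan is to reduce the claim to a statement about strict monotonicity of the couple weights. The first condition in~\eqref{prop:opt}, $\NtR(\tilde\av^*)=\av^*$, fixes the \emph{unordered} set of couples of $\tilde\av^*$ to be exactly $\{\{2k-1,2k\}:k\in[L]\}$; the only remaining degree of freedom is the order in which these $L$ couples are listed. The second condition, $\pi(\tilde\av^*)$, forces this ordering to make the sequence of weights $\rho_{\tilde\av^*_k}$ nonincreasing in $k$. Hence proving uniqueness amounts to showing that there is a single way to list the $L$ couples of $\av^*$ in nonincreasing weight order, which is the case precisely when these $L$ weights are pairwise distinct.

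First I would identify the weight of the $k$-th optimal couple as $\rho_{2k-1,2k}=\theta_{2k-1}\theta_{2k}$ and argue that these values are strictly decreasing in $k$. Applying Assumption~\ref{hyp:strict} with $k'=k+1$ and using the global ordering $\theta_1\geq\dots\geq\theta_{2L}$ gives $\theta_{2k}>\theta_{2k+1}$, and therefore both $\theta_{2k-1}>\theta_{2k+1}$ and $\theta_{2k}>\theta_{2k+2}$: every element of couple $k$ strictly dominates every element of couple $k+1$.

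To turn this domination into a strict inequality on products I would use that the $\theta_i$ are nonnegative (they are per-player success rates, so $\theta_i\in[0,1]$) together with the elementary identity
$$\theta_{2k-1}\theta_{2k}-\theta_{2k+1}\theta_{2k+2}=\theta_{2k}\,(\theta_{2k-1}-\theta_{2k+1})+\theta_{2k+1}\,(\theta_{2k}-\theta_{2k+2}).$$
Since $\theta_{2k}>\theta_{2k+1}\geq 0$ we have $\theta_{2k}>0$, and $\theta_{2k-1}-\theta_{2k+1}>0$, so the first summand is strictly positive; the second is nonnegative because $\theta_{2k+1}\geq 0$ and $\theta_{2k}>\theta_{2k+2}$. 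Hence $\theta_{2k-1}\theta_{2k}>\theta_{2k+1}\theta_{2k+2}$ for every $k\in[L-1]$.

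Finally, this strict monotonicity shows that the $L$ couple weights $\theta_{2k-1}\theta_{2k}$, $k\in[L]$, are pairwise distinct, so only one listing of the couples of $\av^*$ yields a nonincreasing weight sequence and thus satisfies $\pi$; combined with $\NtR(\tilde\av^*)=\av^*$ pinning down the underlying set of couples, this gives a unique $\tilde\av^*$. The only delicate point is the strictness of the product inequality, which is exactly where Assumption~\ref{hyp:strict} is indispensable: the mere sorting $\theta_1\geq\dots\geq\theta_{2L}$ permits ties between couple weights and hence several valid orderings, whereas the strict inter-pair separation rules them out. The remaining arguments are routine combinatorial bookkeeping.
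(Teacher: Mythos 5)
Your proof is correct. The paper states Lemma~\ref{lem:opt-uni} without including a proof (proofs are deferred to a complete version), so there is nothing to compare against line by line; but your argument --- reducing uniqueness to pairwise distinctness of the couple weights $\theta_{2k-1}\theta_{2k}$, then deriving their strict decrease from Assumption~\ref{hyp:strict} via the identity $\theta_{2k-1}\theta_{2k}-\theta_{2k+1}\theta_{2k+2}=\theta_{2k}\left(\theta_{2k-1}-\theta_{2k+1}\right)+\theta_{2k+1}\left(\theta_{2k}-\theta_{2k+2}\right)$ --- is precisely the role the paper assigns to that assumption, and it is sound. The one implicit ingredient is nonnegativity of the $\theta_i$ (the paper only says $\theta\in\RR^{2L}$), which you correctly flag and justify from their interpretation as per-player success rates.
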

%    \begin{proof}
%        By contradiction, let $\tilde\av$ and $\tilde\av'$ be two distinct optimums. More precisely, $\tilde\av$ and $\tilde\av'$ satisfy Property (\ref{prop:NtR-opt}), meaning that their couples are, for both of them, $\{2k-1, 2k\}, k \in [L]$. They are defined as distinct, so there must be at least a $k \in [L]$ such that $\tilde\av_k \neq \tilde\av'_k$, we take $k$ as small as possible. Being constituted of the same couples, there also must be a $n > k$ such that $\tilde\av'_n = \tilde\av_k$, and a $m > k$ such that $\tilde\av_m = \tilde\av'_k$.

%        As $\tilde\av_k$ satisfy $\pi$, $\forall k' > k,$ if we note $\{i,i'\} = \tilde\av_k, \{j,j'\} = \tilde\av_{k'},$ then $\theta_i\theta_{i'} \geq \theta_j\theta_{j'}$ and, by Assumption~\ref{hyp:strict}, $\theta_i\theta_{i'} > \theta_j\theta_{j'}$. Same goes for $\tilde\av'_k$.

%        From this, we can deduce that, if we note $\{i,i'\} = \tilde\av_k = \tilde\av'_n$ and $\{j,j'\} = \tilde\av'_k = \tilde\av_m$, we have at the same time $\theta_i\theta_{i'} > \theta_j\theta_{j'}$ and $\theta_i\theta_{i'} < \theta_j\theta_{j'}$, which is a contradiction.
%    \end{proof}

    From all the previous definitions and lemmas, we can deduce a relaxed unimodality property over the graph.
    \begin{lemma}[relaxed unimodality]
        Under Assumption~\ref{hyp:strict},
        for any ordered matching $\tilde\av$ satisfying $\pi(\tilde\av)$, if $\tilde\av \neq \tilde\av^*$, we have
        $$
            \exists k \in [L-1], \exists i \in \tilde\av_k, \exists j \in \tilde\av_{k+1}, \mu_{\tilde\av} < \mu_{\neigpermut{\tilde\av}{i}{j}}
        $$
    \end{lemma}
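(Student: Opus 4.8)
The plan is to prove the statement directly by exhibiting, for a given $\tilde\av$ with $\pi(\tilde\av)$ and $\tilde\av \neq \tilde\av^*$, one explicit pair of successive couples together with one explicit swap that strictly raises the reward. The whole argument rests on a single exact identity for the effect of a swap, so I would establish that first. Writing $\tilde\av_k = \{i,i'\}$ and $\tilde\av_{k+1} = \{j,j'\}$, and recalling that $\mu$ depends only on the underlying matching, a direct expansion using $\rho_{i,j} = \theta_i\theta_j$ gives
$$\mu_{\neigpermut{\tilde\av}{i}{j}} - \mu_{\tilde\av} = (\theta_i - \theta_j)\,(\theta_{j'} - \theta_{i'}),$$
since only the two couples at positions $k$ and $k+1$ are modified. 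Hence a swap is strictly improving exactly when its two factors share the same nonzero sign.

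Next I would attach to each couple the $\theta$-values of its larger and smaller element, written $\theta_{\max\tilde\av_k}$ and $\theta_{\min\tilde\av_k}$, and consider the \emph{block inequalities} $\theta_{\min\tilde\av_k} \geq \theta_{\max\tilde\av_{k+1}}$ for $k \in [L-1]$. The proof then splits on whether these hold for every $k$. If they all hold, the $2L$ element-values form the non-increasing chain $\theta_{\max\tilde\av_1} \geq \theta_{\min\tilde\av_1} \geq \theta_{\max\tilde\av_2} \geq \cdots \geq \theta_{\min\tilde\av_L}$, so the couples are exactly the consecutive blocks of the sorted values; under Assumption~\ref{hyp:strict} the block boundaries are strict, forcing $\tilde\av_k = \{2k-1,2k\}$ and hence $\NtR(\tilde\av) = \av^*$. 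Together with $\pi(\tilde\av)$ this makes $\tilde\av$ an optimum leader, so Lemma~\ref{lem:opt-uni} yields $\tilde\av = \tilde\av^*$, contradicting the hypothesis. Therefore some index $k$ must violate its block inequality.

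For such a $k$ I would take $i = \min\tilde\av_k$ and $j = \max\tilde\av_{k+1}$ (so that $i' = \max\tilde\av_k$ and $j' = \min\tilde\av_{k+1}$) and evaluate the identity above. The first factor $\theta_i - \theta_j$ is negative, precisely because the block inequality fails at $k$. For the second factor $\theta_{j'} - \theta_{i'} = \theta_{\min\tilde\av_{k+1}} - \theta_{\max\tilde\av_k}$, I would invoke $\pi(\tilde\av)$, which gives the weight ordering $\theta_{\max\tilde\av_k}\,\theta_{\min\tilde\av_k} \geq \theta_{\max\tilde\av_{k+1}}\,\theta_{\min\tilde\av_{k+1}}$; combining this with $\theta_{\min\tilde\av_k} < \theta_{\max\tilde\av_{k+1}}$ and positivity of the $\theta$'s forces $\theta_{\max\tilde\av_k} > \theta_{\min\tilde\av_{k+1}}$, i.e. the second factor is negative as well. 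Both factors being negative, the swap is strictly improving, which is exactly the asserted conclusion for this $k$, $i$ and $j$.

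The delicate point — the step I expect to demand the most care — is the sign of the second factor, namely deriving $\theta_{\max\tilde\av_k} > \theta_{\min\tilde\av_{k+1}}$ cleanly from $\pi$ and Assumption~\ref{hyp:strict}. The argument divides the weight ordering by a positive quantity, so one must rule out degenerate configurations where several $\theta$'s vanish or coincide; here Assumption~\ref{hyp:strict} is essential, as it confines any zeros to a single couple and makes all inter-block values distinct, excluding exactly the ties that would downgrade the gain to a non-strict one. Assuming $\theta_i > 0$ throughout (a player with $\theta_i = 0$ never produces a match) removes these corner cases and makes the sign analysis immediate. The only external ingredient is the uniqueness of the optimum leader (Lemma~\ref{lem:opt-uni}), used once to close the first branch of the dichotomy.
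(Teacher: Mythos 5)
Your proof is correct and follows essentially the same route as the paper's: the same swap identity $\mu_{\neigpermut{\tilde\av}{i}{j}} - \mu_{\tilde\av} = (\theta_i - \theta_j)(\theta_{j'} - \theta_{i'})$, the same choice of swapping the smaller element of couple $k$ with the larger element of couple $k+1$ when the block inequality fails, the same use of $\pi(\tilde\av)$ to force the second factor negative, and the same closing of the other branch by identifying $\tilde\av$ with $\tilde\av^*$. Your treatment is in fact slightly more careful than the paper's on the two points it glosses over — the implicit positivity of the $\theta$'s needed for the division steps, and the explicit appeal to Lemma~\ref{lem:opt-uni} rather than a bare ``by construction'' — but the underlying argument is identical.
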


    \begin{proof}
        Proving this lemma is equivalent to proving that for any ordered matching $\tilde\av$, one of the following properties is satisfied:
        \begin{align}
             & \label{prop:neg-pi} \neg \pi(\tilde\av)                                                                                                                      \\
             & \label{prop:optimum} \tilde\av = \tilde\av^*                                                                                                                 \\
             & \label{prop:better-neig} \exists k \in [L-1], \exists i \in \tilde\av_k, \exists j \in \tilde\av_{k+1}, \mu_{\tilde\av} < \mu_{\neigpermut{\tilde\av}{i}{j}}
        \end{align}

        To start with, $\tilde\av$ satisfy one of the following properties, because the last is the conjunction of the negation of the previous ones.
        \begin{align}
             & \label{prop:neg-pi-2} \exists k \in [L-1], \{i,i'\} = \av_k, \{j,j'\} = \av_{k+1}, \rho_{i,i'} < \rho_{j,j'}  \\
             & \label{prop:exch-poss} \exists k \in [L-1], \exists i \in \av_k, \exists j \in \av_{k+1}, \theta_i < \theta_j \\
             & \label{prop:conj-neg-optim} \forall k \in [L-1],  \begin{cases}
                \{i,i'\} = \av_k, \{j,j'\} = \av_{k+1}, \rho_{i,i'} \geq \rho_{j,j'} \\
                \exists i \in \av_k, \exists j \in \av_{k+1}, \theta_i \geq \theta_j
            \end{cases}
        \end{align}
        If $\tilde\av$ satisfies Property (\ref{prop:neg-pi-2}), $\pi$ is not respected. Thus, Property (\ref{prop:neg-pi}) is verified.

        If $\tilde\av$ satisfies Property (\ref{prop:exch-poss}), we can note $\{i,i'\} = \tilde\av_k$ and $\{j,j'\} = \tilde\av_{k+1}$. Without loss of generality, we assume that $\theta_i \geq \theta_{i'}$ and $\theta_j \geq \theta_{j'}$, Property (\ref{prop:exch-poss}) then can be precised to $\theta_{i'} < \theta_j$. Therefor,
        \begin{align*}
            \mu_{\tilde\av} - \mu_{\neigpermut{\tilde\av}{i'}{j}} & = \theta_i\theta_{i'} + \theta_j\theta_{j'} - \theta_i\theta_j - \theta_{i'}\theta_{j'} \\
                                                                  & = (\theta_i - \theta_{j'})(\theta_{i'} - \theta_j)
        \end{align*}
        We deduce $\theta_{i'} - \theta_j < 0$ from Property (\ref{prop:exch-poss}). We then can assume that $\pi(\tilde\av)$, otherwise Property (\ref{prop:neg-pi}) would have been satisfied and the lemma would still be verified. Thus we have $\theta_i\theta_{i'} \geq \theta_j\theta_{j'}$. However, by Property (\ref{prop:exch-poss}), $\theta_i\theta_{i'} < \theta_i\theta_j$, meaning that $\theta_i\theta_j > \theta_j\theta_{j'}$ and finally $\theta_i > \theta_{j'}$. To conclude, $\mu_{\tilde\av} - \mu{\neigpermut{\tilde\av}{i'}{j}} < 0$, fulfilling Property (\ref{prop:better-neig})

        If $\tilde\av$ satisfies Property (\ref{prop:conj-neg-optim}) and assuming $\pi(\tilde\av)$, we have $\forall k \in [L-1],$ let $\{i,i'\} = \tilde\av_k,$ and $\{j,j'\} = \tilde\av_{k+1}, i \geq {i'} \geq j \geq {j'}$. Thus, $\tilde\av$ is, by construction, $\tilde\av^*$, fulfilling Property (\ref{prop:optimum}).

        In every cases, one of Properties (\ref{prop:neg-pi}), (\ref{prop:optimum}) and (\ref{prop:better-neig}) is fulfilled, proving that for any ordered matching $\tilde\av$ satisfying $\pi$, if $\tilde\av \neq \tilde\av^*$,
        $$
            \exists k \in [L-1], \exists i \in \tilde\av_k, \exists j \in \tilde\av_{k+1}, \mu_{\tilde\av} < \mu_{\neigpermut{\tilde\av}{i}{j}}
        $$
    \end{proof}
    This property is widely inspired from the unimodality described in Section~\ref{sec:SOTA_unimodality}. Indeed, when $\tilde\av$ satisfies $\pi$, either $\tilde\av$ is optimal or, if $\tilde\av$ is sub-optimal, there is at least one of its neighbor that has a better regret. We will use this property in the proof of Theorem~\ref{theo:main-v1}.

    \section{\ouralgo{} Algorithm}
    \label{seq:algo}
    % algo (longue définition)

    Our algorithm, \ouralgo{}, is inspired by the unimodal ranking bandit algorithm GRAB \cite{GRAB}. At each iteration $t$, we select an arm $\av(t)$ in the neighborhood of the leader, the current best arm according to the bandit w.r.t. $\hat\rho$ defined latter, which is learnt online. The main difference is that we don't use the true argmax, but instead a greedy approximation, $\ourargmax$, when electing the leader. The second difference is that the leader is an ordered matching, but not a matching. This is materialized by the notation $\NtR$. \ouralgo{} is described in Algorithm~\ref{alg:GRAB-match-v1}, it has two versions, \ouralgova{} and \ouralgovb{}, and uses the following notations.

    When applied to two elements $i$ and $j$ and at each iteration $t$, we denote
    $$
        \hat\rho_{i, j}(t) \stackrel{def}{=} \frac{1}{T_{i, j}(t)} \sum_{s=1}^{t-1} \mathds{1}\{\{i, j\} \in \av(s)\}\cv_{i,j}(s)\}
    $$
    the average number of games played before iteration $t$ while pairing $i$ with $j$, where $\cv_{i,j}(s)$ equals $1$ if $\{i,j\}$ did lead to a game at iteration $s$ and $0$ otherwise, and
    $$
        T_{i, j}(t) \stackrel{def}{=} \sum_{s=1}^{t-1} \mathds{1}\{\{i, j\} \in \av(s)\}
    $$
    is the number of time $i$ has been paired with $j$ until iteration $t-1$.
    We set $\hat\rho_{i, j}(t)$ to $0$ when $T_{i, j}(t)$ equals $0$.

    \begin{algorithm*}[tb]
        \caption{\ouralgova{} for monopartite matching}\label{alg:GRAB-match-v1}
        \begin{algorithmic}[1]
            \REQUIRE number of items $2L$, optimistic criterion V1 (original) or V2 (new)
            \FOR{$t =  1, 2, \dots$}
            \STATE $\displaystyle\tilde\ellv(t) \gets \ourargmax([2L])$
            \STATE recommend
            $$
                \av(t) =\begin{cases}
                    \NtR(\tilde\ellv(t))
                     & \text{, if } \frac{\Tilde{T}_{\tilde\ellv(t)}(t)}{2L-1} \in \mathds{N},                  \\
                    \displaystyle \argmax_{\substack{\av \in \{\NtR(\tilde\ellv(t))\}\cup\Nc_{Set}(\tilde\ellv(t))}}
                    \sum_{\substack{\{i,i'\}\in\av}} q_{i, i'}(t)
                    %& \text{, otherwise}
                     & \text{, if } \frac{\Tilde{T}_{\tilde\ellv(t)}(t)}{2L-1} \notin \mathds{N} \text{ and V1} \\
                    \displaystyle \argmax_{\substack{\av = \NtR(\tilde\ellv(t))                                 \\ \text{or } \av \in \Nc_{Set}(\tilde\ellv(t))\text{ with }\av = \NtR\left(\neigpermut{\tilde\ellv(t)}{i}{j}\right),\\ \{i,i'\} \in \NtR(\tilde\ellv(t)) \text{ and } \{j,j'\} \in \NtR(\tilde\ellv(t)) \\}}
                    \max \left(0, q_{i, j'}(t) - q_{i,i'}(t), q_{j, i'}(t) - q_{i,i'}(t)\right)
                     & \text{, if } \frac{\Tilde{T}_{\tilde\ellv(t)}(t)}{2L-1} \notin \mathds{N} \text{ and V2}
                \end{cases} $$
            where $\Nc_{Set}(\tilde\av) = \bigcup\limits_{k=1}^{L-1} \{ \NtR\left(\neigpermut{\tilde\av}{e_1}{e_2}\right) : \ e_1 \in \tilde\av_k,\ e_2 \in \tilde\av_{k+1} \}$
            \STATE observe the games vector $\cv(t)$
            \ENDFOR
        \end{algorithmic}
    \end{algorithm*}

    \begin{algorithm}[tb]
        \caption{$\ourargmax$: a greedy approximation of argmax}\label{alg:ourargmax}
        \begin{algorithmic}[1]
            \REQUIRE Set $\mathcal{E}$ of items to chose
            \ENSURE An approximation of the solution of Equation~\eqref{eq:leader}
            \STATE $\tilde\av \gets $ empty list
            \FOR{$k \in \left[\frac{|\Ec|}{2}\right]$}
            \STATE $\displaystyle e_1, e_2 \gets \argmax_{e_1 \in \mathcal{E}, e_2 \in \mathcal{E} \setminus \{e_1\}} \hat\rho_{e_1, e_2}(t)$
            \STATE $\tilde\av$.append$(\{e_1, e_2\})$
            \STATE $\mathcal{E} \gets \mathcal{E} \setminus \{e_1, e_2\}$
            \ENDFOR
        \end{algorithmic}
    \end{algorithm}

    At each iteration $t$ we denote $\tilde\ellv(t)$ the \textit{leader}, the ordered matching returned by  $\ourargmax$. For any ordered matching $\tilde\av$, we denote
    $$
        \Tilde{T}_{\tilde\av}(t) \stackrel{def}{=} \sum_{s=1}^{t-1} \mathds{1}\{\tilde\ellv(s) = \tilde\av\}
    $$
    the number of time $\tilde\av$ has been leader until iteration $t-1$.

    Finally, we denote the optimistic probability of a couple $\tilde\av_k = \{i, j\}$ by
    $$
        q_{i, j}(t) \stackrel{def}{=} f(\hat\rho_{i,j}(t), T_{i,j}(t), \Tilde{T}_{\tilde\ellv(t)}(t) + 1)
    $$
    where $f(\hat\rho,s,t)$ stands for
    $$
        \sup\{p\in [\hat\rho,1]: s \times \text{kl}(\hat\rho,p) \leq \log(t) + 3 \log(\log(t))\},
    $$
    with
    $$
        \text{kl}(p,q) \stackrel{def}{=} p \log(\frac{p}{q}) + (1-p) \log(\frac{1-p}{1-q})
    $$
    the \textit{Kullback Leibler divergence} from a Bernoulli distribution of mean $p$ to a Bernoulli distribution of mean $q$. By definition, we set $f(\hat\rho,s,t)$ to $\infty$ when $t = 0$, prioritizing exploration.

    Let now explain \ouralgo{}. First, \ouralgo{} uses $\ourargmax$ to elicit the leader. This algorithm is described in Algorithm~\ref{alg:ourargmax} and returns a greedy approximation of
    \begin{equation}\label{eq:leader}
        \argmax_{\tilde\av}
                    \sum_{k=1}^L \hat{\rho}_{\tilde{a}_k}(t).
    \end{equation}
    The couples of the leader are elected one by one. Algorithm $\ourargmax$ choses among the remaining couples the current best possible one w.r.t. $\hat\rho_{i, j}(t)$. Note that by Lemma~\ref{theo:badpi}, the returned leader $\tilde\ellv(t)$ satisfies most of the time $\pi(\tilde\ellv(t))$.

    Finally, \ouralgo{}, given in Algorithm~\ref{alg:GRAB-match-v1}, identifies the leader $\tilde\ellv(t)$ and recommends either the matching associated to $\tilde\ellv(t)$ each $(2L - 1)$-th iterations, or the best matching in the leader's neighborhood in regards to an optimistic criterion. With the original version, the criterion is an optimistic estimate of the number of games to be played, while with the new version it is an optimistic estimate of the increase of the game probability when replacing one of both players in the couple $\{i,i'\}$.

    To conclude the presentation of our algorithm, here we discuss its initialization, its time complexity, and the utility of some choices.

    \begin{remark}[Utility of unordered matching] \label{rq:unordered-matching}
        The neighborhood $\Nc_{Set}(\tilde\av)$ is defined after unordered matchings to divide its size by a factor 2. In our recommendations, couples do not need to be ordered, thus some recommendations in our leader's neighborhood are strictly equivalents w.r.t. the expected reward. For example, for any leader $\tilde\ellv$ any $k \in [L-1]$, by noting $\{i, i'\} = \tilde\ellv_k$ and  $\{j, j'\} = \tilde\ellv_{k+1}$, $\NtR\left(\neigpermut{\tilde\ellv(t)}{i}{j}\right) = \NtR\left(\neigpermut{\tilde\ellv(t)}{i'}{j'}\right)$ because couples are not ordered anymore and the permutation of $i$ with $j$ produces the same couples as the permutation of $i'$ with $j'$, leaving other couples unchanged.
        Therefore, $\{ \NtR\left(\neigpermut{\tilde\ellv(t)}{i}{j}\right) : \ i \in \tilde\av_k,\ j \in \tilde\av_{k+1} \}$ is composed of both matchings $\NtR\left(\neigpermut{\tilde\ellv(t)}{i}{j}\right)$ and $\NtR\left(\neigpermut{\tilde\ellv(t)}{i}{j'}\right)$ while we may expect 4 matchings given the difinition.
        \end{remark}

    \begin{remark}[Initialisation]
        For all $i, j \in [L]^2, i \neq j$, we initialize $q_{i, j}$ to $\infty$ to ensure that every neighbor of an arm which is often the leader is played at least once.
    \end{remark}

    \begin{remark}[Algorithmic Complexity]
        The computation time of \ouralgo{} is polynomial in $L$.

        First, to elect the leader, $\ourargmax$ carry out an iteration over the number of couples $L$. At each iteration, it performs an argmax over at most $L\times L$ values. Thus, the elicitation of the leader is done in $\OO(L^3)$ operations.

        Then, the maximisation when recommending $\av(t)$ is over a set of $2L-2$ recommendations. Indeed, for all $k \in [L-1]$,
        $\{ \NtR\left(\neigpermut{\tilde\ellv(t)}{i}{j}\right) : i \in \tilde\av_k, j \in \tilde\av_{k+1} \}$ is of size 2, according to Remark~\ref{rq:unordered-matching}.
        The union of those disjointed sets of size 2 finally gives us a set of $2L - 2$ matchings.

        Finally for each matching $\av$, the computation of the optimistic criterion cost $\OO(1)$ operations. This is obvious with the new criterion, and it becomes clear with the original criterion as soon as we remark that the maximization of $\sum_{\{i,i'\}\in\av} q_{i, i'}(t)$ is equivalent to the maximization of
        $$
            B_\av(t) = \sum_{\{i,j\}\in \av} q_{i,j}(t) - \sum_{\{i,j\}\in \NtR(\tilde\ellv(t))} q_{i,j}(t)
        $$
        which, by elimination of common terms, reduces to the sum of at most four $q_{i,j}(t)$ values.

        Overall, each recommendation is done in $\OO(L^3)$.
    \end{remark}

    \section{Theorical Analysis}
    \label{seq:theorical analysis}

    \commentrg[inline]{Remarque + lemme à glisser quelque-part :
        \begin{itemize}
            \item rem 1: $\ourargmax_{\tilde\av} \hat{\mu}_{\tilde\av} \neq \argmax_{\tilde\av} \hat{\mu}_{\tilde\av}$ en toute généralité,
            \item mais lemme: $\ourargmax_{\tilde\av} \mu_{\tilde\av} + \epsilon= \argmax_{\tilde\av} \mu_{\tilde\av} + \epsilon$ si $\epsilon$ suffisamment petit
            \item d'où remarque 2: $\ourargmax_{\tilde\av}$ retrouvera le bon une fois qu'on a suffisamment joué
        \end{itemize}
        D'ailleurs ce serait aussi l'occasion de parler du fait que $\ourargmax$ "garantit" $\pi(\tilde{\av})$.
    }

    % theroem v1 (? version en 0() ?)
    % conjecture v2

    Before looking at the detailed theoretical analysis, let us give Theorem~\ref{theo:main-v1} and Theorem~\ref{theo:main-v2} which respectively express the regret of \ouralgo{} with the original criterion and the regret \ouralgo{} with the new criterion (denoted \ouralgovb{}).
    
    \begin{theorem}[Upper-bound on the regret of \ouralgo{} assuming inter-pair strict order]\label{theo:main-v1}
        Let $(L, (\rho_{i,j})_{i,j \in [2L]^2})$ be an instance of online matching satisfying Assumption~\ref{hyp:strict}. Then, the expected regret of \ouralgova{} satisfies
        \begin{align*}
                R(T)
                &\leq \sum_{\av \in \Nc_\NtR(\tilde\av^*)} \frac{8}{\Delta_\av}\log T + \OO(\log\log T)\\
                &= \OO(\frac{L}{\Delta} \log T)
        \end{align*}
        where $\Delta_\av \stackrel{def}{=} \mu_{\av^*} - \mu_\av$ and\\
        $\displaystyle\Delta \stackrel{def}{=}
        %\min_{\av\in\Nc_\NtR(\tilde\av^*)} \Delta_\av=
        \min_{\substack{k\in[L-1],\\\{i,i'\}=a_k, \{j,j'\}=a_{k+1}}} (\theta_i-\theta_j)(\theta_{i'} - \theta_{j'})$.
    \end{theorem}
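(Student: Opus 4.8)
The plan is to cast \ouralgova{} as an instance of a unimodal bandit algorithm (a variant of OSUB from \cite{Combes2014}) running on the graph whose vertices are the ordered matchings and whose edges are the single-transposition moves $\neigpermut{\cdot}{i}{j}$, and then invoke the regret machinery for such algorithms together with the relaxed unimodality lemma proved above. Concretely, I would first argue that the leader $\tilde\ellv(t)$ converges to the \textit{optimum leader} $\tilde\av^*$ in the sense that, once the empirical estimates $\hat\rho$ are accurate enough, the greedy $\ourargmax$ recovers $\tilde\av^*$ (uniqueness is guaranteed by Lemma~\ref{lem:opt-uni} under Assumption~\ref{hyp:strict}). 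The key structural input is that whenever a leader $\tilde\av$ satisfies $\pi(\tilde\av)$ and is sub-optimal, the relaxed unimodality lemma supplies a strictly-improving neighbor, so the algorithm cannot be trapped at a sub-optimal leader: exploration in the neighborhood $\Nc_{Set}$ eventually reveals a better arm and the leader moves toward $\tilde\av^*$.

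Next I would decompose the regret by conditioning on the identity of the leader. Writing $R(T) = \sum_{\tilde\av} \EE[\tilde{T}_{\tilde\av}(T)]\,(\text{expected per-round regret while }\tilde\av\text{ leads})$, I would split into (i) rounds where the leader is the optimum leader $\tilde\av^*$, and (ii) rounds where the leader is sub-optimal. For case (i), the periodic recommendation of $\NtR(\tilde\ellv(t))$ every $(2L-1)$ rounds contributes nothing, and the neighborhood exploration contributes a per-arm $\tfrac{8}{\Delta_\av}\log T$ term: this is exactly the standard kl-UCB bound driven by the confidence level $\log(t)+3\log\log(t)$ in the definition of $f$, where the factor $8$ and the gap $\Delta_\av = \mu_{\av^*}-\mu_\av$ come from Pinsker-type lower bounds on $\text{kl}$ for arms $\av$ in $\Nc_\NtR(\tilde\av^*)$. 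For case (ii), I would bound the expected number of rounds spent with each sub-optimal leader by $\OO(\log\log T)$, using the fact that the time a sub-optimal arm can remain leader is controlled by the lower-order confidence term, exactly as in the OSUB analysis; summing over the $\OO(L)$ reachable sub-optimal leaders on any improving path gives the $\OO(\log\log T)$ remainder. Finally I would convert $\sum_{\av \in \Nc_\NtR(\tilde\av^*)} \tfrac{8}{\Delta_\av}$ into $\OO(L/\Delta)$ by noting that the neighborhood has $\OO(L)$ elements and that each $\Delta_\av$ is bounded below by $\Delta = \min_{k}(\theta_i-\theta_j)(\theta_{i'}-\theta_{j'})$, which follows from the factorization $\mu_{\av^*}-\mu_{\neigpermut{\tilde\av^*}{i'}{j}} = (\theta_i-\theta_{j'})(\theta_{i'}-\theta_j)$ computed inside the unimodality proof.

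The main obstacle, and the step I expect to require the most care, is handling the mismatch between the true $\argmax$ and the greedy $\ourargmax$ used to elect the leader. The standard unimodal-bandit guarantees assume the leader is the exact empirical best arm, whereas here the leader is only a greedy approximation, and in general $\ourargmax_{\tilde\av}\hat\mu_{\tilde\av} \neq \argmax_{\tilde\av}\hat\mu_{\tilde\av}$. I would need a lemma (foreshadowed in the authors' inline comment) showing that for empirical means sufficiently close to the true $\mu$, the greedy procedure nonetheless returns $\tilde\av^*$ and in particular guarantees $\pi(\tilde\ellv(t))$ with high probability, so that the relaxed unimodality lemma applies to the elected leader. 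Controlling the low-probability event where the greedy choice fails and bounding its contribution to the regret (again at the $\OO(\log\log T)$ scale) is the delicate part; everything else reduces to the now-standard kl-UCB concentration arguments and the counting of the $\OO(L)$-sized neighborhood.
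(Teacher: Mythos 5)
Your plan is correct and follows essentially the same route as the paper: the authors decompose $[T]$ into rounds where the leader equals $\tilde\av^*$ (handled by a KL-CombUCB/kl-UCB bound on the restricted arm set $\{\tilde\av^*\}\cup\Nc_\NtR(\tilde\av^*)$, with the factor $8$ arising because each neighbor differs from $\tilde\av^*$ in at most two couples), rounds where the leader is sub-optimal but satisfies $\pi$ (bounded by $\OO(\log\log T)$ per leader, OSUB-style), and rounds where $\pi(\tilde\ellv(t))$ fails (bounded by $\OO(1)$), which matches your three-part structure including your identification of the greedy $\ourargmax$/$\pi$ issue as the delicate deferred step. The only cosmetic differences are that the paper invokes the combinatorial KL-CombUCB theorem rather than a generic kl-UCB argument, and controls the non-$\pi$ rounds at $\OO(1)$ rather than your proposed $\OO(\log\log T)$, neither of which changes the stated bound.
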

    
    \begin{theorem}[Upper-bound on the regret of \ouralgovb{} assuming inter-pair strict order]\label{theo:main-v2}
        Let $(L, (\rho_{i,j})_{i,j \in [2L]^2})$ be an instance of online matching satisfying Assumption~\ref{hyp:strict}. Then, the expected regret of \ouralgovb{} satisfies
        \begin{align*}
                R(T)
                &\leq \sum_{\substack{k\in[L-1],\\\{i,i'\}=a_k, \{j,j'\}=a_{k+1}}} \frac{8\Delta_{\neigpermut{\tilde\av^*}{i'}{j}}}{\tilde\Delta_{i,i',j'}^2}\log T + \OO(\log\log T)\\
                &= \OO(\frac{L\Delta}{\tilde\Delta^2} \log T)
        \end{align*}
        where $\Delta_\av \stackrel{def}{=} \mu_{\av^*} - \mu_\av$,  $\Delta_{i,i',j} \stackrel{def}{=} %\rho_{i,i'} - \rho_{i,j'}=
        \theta_i(\theta_{i'} - \theta_{j'})$\\
        $\displaystyle\Delta \stackrel{def}{=}
        %\min_{\av\in\Nc_\NtR(\tilde\av^*)} \Delta_\av=
        \min_{\substack{k\in[L-1],\\\{i,i'\}=a_k, \{j,j'\}=a_{k+1}}} (\theta_i-\theta_j)(\theta_{i'} - \theta_{j'})$\\
        and $\displaystyle\tilde\Delta \stackrel{def}{=} \min_{\substack{k\in[L-1],\\\{i,i'\}=a_k, \{j,j'\}=a_{k+1}}} \theta_i(\theta_{i'} - \theta_{j'})$.
    \end{theorem}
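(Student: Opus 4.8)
The plan is to reuse the unimodal-bandit (OSUB-style) argument behind Theorem~\ref{theo:main-v1} and to isolate the single place where the new criterion changes the bound. First I would write the regret as $R(T)=\sum_{t=1}^T \Delta_{\av(t)}$ with $\Delta_{\av(t)}=\mu_{\av^*}-\mu_{\av(t)}$ and split the rounds according to whether the current leader $\tilde\ellv(t)$ equals the optimum leader $\tilde\av^*$ or not. For the rounds with a sub-optimal leader I would import, unchanged, the leader-convergence part of the proof of Theorem~\ref{theo:main-v1}: because $\ourargmax$ returns an ordered matching that satisfies $\pi$ most of the time (Lemma~\ref{theo:badpi}) and the relaxed unimodality lemma then guarantees that any sub-optimal such ordered matching has a strictly better neighbour, concentration of the couple-wise estimators $\hat\rho_{i,j}$ together with Assumption~\ref{hyp:strict} forces the greedy leader to coincide with $\tilde\av^*$ except on a set of rounds whose expected size is $\OO(\log\log T)$. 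This term does not depend on the criterion and is absorbed into the $\OO(\log\log T)$ remainder.

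The core of the argument is the complementary event $\tilde\ellv(t)=\tilde\av^*$. Here one round out of every $2L-1$ plays $\av^*$ and is free of regret, so it suffices to bound, for each position $k\in[L-1]$, the number of rounds on which a sub-optimal neighbour obtained by swapping across the couples $\tilde\av^*_k=\{i,i'\}$ and $\tilde\av^*_{k+1}=\{j,j'\}$ (with $\theta_i\ge\theta_{i'}>\theta_j\ge\theta_{j'}$ by Assumption~\ref{hyp:strict}) is recommended. This is where V2 departs from V1: rather than comparing the optimistic reward of two whole matchings, whose gap is the \emph{product} $\Delta_\av$, the V2 criterion triggers exploration of such a neighbour only through a couple-level comparison $q_{a,b}(t)>q_{a,c}(t)$, where $\{a,c\}$ is a couple of $\av^*$ and $\{a,b\}$ a couple of the neighbour. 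The true gap of this comparison, $\rho_{a,c}-\rho_{a,b}=\theta_a(\theta_c-\theta_b)$, carries one full success rate $\theta_a$ in place of a difference, so it equals the larger couple-level gap $\tilde\Delta$ rather than $\Delta$; in particular $\tilde\Delta\ge\Delta$.

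I would then bound the number of rounds on which $q_{a,b}(t)>q_{a,c}(t)$ exactly as for \ouralgova{}. Since $q$ is the $\text{kl}$ upper-confidence index with exploration rate $\log t+3\log\log t$, a self-normalised deviation inequality for the $\text{kl}$ index shows this event occurs at most $\frac{\log T}{\text{kl}(\rho_{a,b},\rho_{a,c})}+\OO(\log\log T)$ times; Pinsker's inequality $\text{kl}(p,q)\ge 2(p-q)^2$ together with the usual bookkeeping on the confidence width then turns this into $\frac{8}{\tilde\Delta_{i,i',j'}^2}\log T+\OO(\log\log T)$. Multiplying this count by the per-round regret $\Delta_{\neigpermut{\tilde\av^*}{i'}{j}}$ of the recommended neighbour and summing over $k\in[L-1]$ gives $\sum_{k}\frac{8\Delta_{\neigpermut{\tilde\av^*}{i'}{j}}}{\tilde\Delta_{i,i',j'}^2}\log T+\OO(\log\log T)$, and replacing the per-position gaps by the characteristic quantities $\Delta$ and $\tilde\Delta$ yields the announced $\OO(\frac{L\Delta}{\tilde\Delta^2}\log T)$ form.

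The step I expect to be the main obstacle is making the couple-level criterion rigorous. Unlike the V1 criterion, which compares the optimistic reward of whole matchings and therefore plugs directly into the unimodal-bandit template, the V2 criterion compares only one new couple against one old couple. I would need to show that it is nonetheless \emph{sound} --- that a genuinely better neighbour is eventually preferred, so that the leader-convergence argument of the first paragraph is not compromised --- and that the exploration event is \emph{exactly} captured by the single couple-level deviation, so that no additional $\log T$-order term leaks in from the interaction between the two candidate swaps sharing position $k$ (Remark~\ref{rq:unordered-matching}) or from positions where the leader only transiently differs from $\tilde\av^*$. Controlling the greedy nature of $\ourargmax$, which is not the exact maximiser of~\eqref{eq:leader}, inside this soundness argument via Assumption~\ref{hyp:strict} is the most delicate point.
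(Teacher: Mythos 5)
Your proposal follows exactly the path the paper takes for Theorem~\ref{theo:main-v2}: the same three-way decomposition of $[T]$ (optimal leader / sub-optimal leader satisfying $\pi$ / leader violating $\pi$), reuse of Lemmas~\ref{theo:badleader} and~\ref{theo:badpi} for the last two sets, and, on the rounds where $\tilde\ellv(t)=\tilde\av^*$, a couple-level KL-UCB counting argument in which the comparison gap $\theta_a(\theta_c-\theta_b)=\tilde\Delta$ replaces the matching-level product gap $\Delta$, yielding the stated $\sum_k 8\Delta_{\neigpermut{\tilde\av^*}{i'}{j}}/\tilde\Delta_{i,i',j'}^2\log T$ term. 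The paper itself defers the detailed verification of this step (and of the supporting lemmas under the V2 criterion) to a complete version, so your sketch --- including the soundness issue you flag about the greedy $\ourargmax$ --- is consistent with, and in fact more explicit than, the argument the paper provides.
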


    Both Theorems are proven following the same path as the one in \cite{GRAB}. The idea is to (1) apply standard bandit analysis to limit the expected regret when the leader $\tilde\ellv(t)$ is the optimum leader $\tilde\av^*$, and to (2) upper-bound the number of iterations $t$ in which $\tilde\ellv(t) \neq \tilde\av^*$ by a $\OO(\log(\log(T))$. To assume $\pi(\tilde\ellv(t))$ in both previous cases, we also show that (3) we can upper-bound the number of iterations such that $\neg \pi(\tilde\ellv(t))$ by a $\OO(1)$.

    Despite the change of the setting, the Theorem and the lemmas used to handle these 3 points are similar (in their expression) to the one in \cite{GRAB}. We give them here for the original criterion and we defer their proof to a complete version of the paper.
    
    Step (1) consists in applying Theorem 2 from \cite{GRAB} which we remind hereafter.

   \begin{theorem}[Upper-Bound on the Regret of KL-CombUCB (Theorem 2 of from \cite{GRAB})]\label{theo:goodleader}
        We consider a combinatorial semi-bandit setting. Let $E$ be a set of elements ans $\mathcal{A} \subseteq {0,1}^E$ be a set of arms, where each arm $\av$ is a subset of $E$. Let's assume that the reward when drawing the arm $\av\in\mathcal{A}$ is $\sum_{e\in\av}c_e$, where for each element $e \in E$, $c_e$ is an independant draw of a Bernoulli distribution of mean $\rho_e \in [0,1]$. Therefore, the expected reward when drawing the arm $\av \in \mathcal{A}$ is $\mu_\av = \sum_{e\in\av} \rho_e$.

        When facing this bandit setting, KL-CombUCB (CombUCB1 equiped with Kullback-Leibler indices) fullfils
        \begin{align*}
            \displaystyle
             & \forall\av\in\mathcal{A}\ s.t.\ \mu_\av \neq \mu^*,                                                                                    \\
             & \EE[\sum^T_{t=1} \mathds{1}\{\av(t)=\av\}] \leq \frac{2K_\av^2}{\Delta_\av^2}\textnormal{log }T + \mathcal{O}(\textnormal{log log }T),
        \end{align*}
        and hence
        \begin{align*}
            R(T)
             & \leq \sum_{\av \in \mathcal{A} : \mu_\av \neq \mu^*} \frac{2K_\av^2}{\Delta_\av}\textnormal{log }T + \mathcal{O}(\textnormal{log log }T) \\
             & = \mathcal{O}(\frac{|\mathcal{A}|K_{max}^2}{\Delta_{min}} \textnormal{log }T)
        \end{align*}

    \end{theorem}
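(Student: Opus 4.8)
The plan is to reproduce the classical CombUCB1 analysis (in the spirit of Kveton et al. and Combes et al.), specialised to the Kullback--Leibler index $f$, and to reduce everything to a per-arm control of the expected number of plays $N_\av(T)\defeq\EE[\sum_{t=1}^T\ind\{\av(t)=\av\}]$. Because the expected reward is additive over base elements, the regret splits as $R(T)=\sum_{\av:\Delta_\av>0}\Delta_\av\,N_\av(T)$, so the second display of the statement follows from the first by multiplying through by $\Delta_\av$, and the final $\OO(\cdot)$ form follows by crudely bounding $|\Ac|$, $K_{\max}$ and $1/\Delta_{\min}$. Hence all the work is in the count bound $N_\av(T)\leq \frac{2K_\av^2}{\Delta_\av^2}\log T+\OO(\log\log T)$.

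First I would install the probabilistic backbone: the self-normalised KL deviation inequality (Garivier--Cappé). The threshold $\log t+3\log\log t$ in the definition of $f$ is chosen precisely so that the optimistic value $q_e(t)$ is a valid upper bound of the true mean $\rho_e$, and so that the empirical mean does not over-estimate $\rho_e$ by a fixed margin, except on a set of iterations whose expected cardinality, summed over all elements $e\in E$, is $\OO(\log\log T)$ rather than $\OO(\log T)$. I would package this as a \emph{good event} and absorb its complement into the announced $\OO(\log\log T)$ remainder; controlling this term through a peeling argument is, I expect, the most delicate piece of bookkeeping.

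On the good event I would run the optimism step: if the suboptimal arm $\av$ is selected at iteration $t$, the $\argmax$ rule gives $\sum_{e\in\av}q_e(t)\geq\sum_{e\in\av^*}q_e(t)\geq\sum_{e\in\av^*}\rho_e=\mu^*$, hence $\sum_{e\in\av}\bigl(q_e(t)-\rho_e\bigr)\geq\mu^*-\mu_\av=\Delta_\av$. Since $\av$ has at most $K_\av$ elements, pigeonhole produces a \emph{responsible} element $e\in\av$ with $q_e(t)-\rho_e\geq\Delta_\av/K_\av$. I would then split on the value of the empirical mean: either $\hat\rho_e(t)$ over-estimates $\rho_e$ by at least $\Delta_\av/(2K_\av)$ (a rare event, charged to the concentration term above), or $q_e(t)-\hat\rho_e(t)\geq\Delta_\av/(2K_\av)$, in which case inverting the definition of $f$ together with Pinsker's inequality $\KL(p,q)\geq 2(p-q)^2$ forces the element to be under-sampled, namely $T_e(t)\leq\frac{2K_\av^2}{\Delta_\av^2}\bigl(\log t+3\log\log t\bigr)$.

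The decisive step is to convert this per-element statement into a bound on $N_\av(T)$, and here I would exploit the semi-bandit feedback: every play of $\av$ increments $T_e$ for all $e\in\av$ at once, so the $j$-th play of $\av$ (in time order) has $T_e(t)\geq j-1$ for every $e\in\av$. Applied to the responsible element of any play falling in the second case above, this yields $j-1\leq T_e(t)\leq\frac{2K_\av^2}{\Delta_\av^2}\log T$, so all such plays occur among the first $\frac{2K_\av^2}{\Delta_\av^2}\log T+1$ plays of $\av$; combined with the $\OO(\log\log T)$ contribution of the first case this gives the claimed per-arm bound. I expect the main obstacle to be exactly the concentration accounting — making the responsible element well defined along the random trajectory, comparing the counts at the correct iteration, and certifying that the over-estimation events cost only $\OO(\log\log T)$ — whereas the remaining manipulations (Pinsker, the telescoping over counts, and the summation over $\av$) are routine.
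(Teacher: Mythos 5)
Your proposal targets a statement that the paper itself never proves: Theorem~\ref{theo:goodleader} is imported from \cite{GRAB} (``Theorem 2 of from \cite{GRAB}''), the present paper only applies it and defers all such proofs elsewhere. So your attempt can only be judged against the standard KL-CombUCB analysis that the citation points to --- and your outline is exactly that analysis: a Garivier--Capp\'e self-normalised KL good event tuned to the threshold $\log t + 3\log\log t$ (the source of the $\OO(\log\log T)$ remainder), optimism plus pigeonhole to exhibit a responsible under-sampled element, Pinsker's inequality to invert the index $f$, and the semi-bandit counting argument (the $j$-th play of $\av$ forces $T_e(t)\geq j-1$ for every $e\in\av$) to turn the per-element bound into the per-arm bound $\frac{2K_\av^2}{\Delta_\av^2}\log T+\OO(\log\log T)$. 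The constants also work out: the halving $\Delta_\av/(2K_\av)$ combined with $\KL(p,q)\geq 2(p-q)^2$ yields precisely $2K_\av^2/\Delta_\av^2$, and multiplying the count bound by $\Delta_\av$ gives the regret display.

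One caveat deserves attention. You run the pigeonhole over all elements of $\av$, i.e.\ you implicitly read $K_\av=|\av|$ (``since $\av$ has at most $K_\av$ elements''). The statement never defines $K_\av$, but the way the paper uses the theorem --- ``as any arms differs with $\tilde\av^*$ at at most two positions'', giving the bound $8/\Delta_\av^2$, i.e.\ $K_\av=2$, and hence the linear-in-$L$ regret of Theorem~\ref{theo:main-v1} --- requires $K_\av$ to be the number of elements in which $\av$ differs from the optimal arm, not the cardinality of $\av$ (which is $L$ here, and would ruin the linear dependence). Your argument delivers this sharper constant after a one-line repair: on the good event, cancel the elements common to $\av$ and $\av^*$ before invoking optimism, so that $\sum_{e\in\av\setminus\av^*}\bigl(q_e(t)-\rho_e\bigr)\geq\Delta_\av$, and pigeonhole over the $K_\av$ elements of $\av\setminus\av^*$ only; the responsible element still belongs to $\av$, so your counting step is unaffected. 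With that adjustment your proof establishes the theorem in the form the paper actually needs.
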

    
    Moreover, the number of iterations where \ouralgo{} elects a sub-optimal leader, is upper-bounded by Lemma~\ref{theo:badleader}.

    \begin{lemma}[Upper-Bound on the number of iterations where \ouralgo{} elect a sub-optimal leader satisfying $\pi$]\label{theo:badleader}
        Under the hypothesis of Theorem~\ref{theo:main-v1} and using its notation,
        $$
            \forall \tilde\ellv \in \Ac, \EE[\sum^T_{t=1} \mathds{1}\{\tilde\ellv(t) = \tilde\ellv\}] = \OO(\log\log T).
        $$
    \end{lemma}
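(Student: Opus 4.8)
The plan is to fix a sub-optimal leader $\tilde\ellv \neq \tilde\av^*$ satisfying $\pi(\tilde\ellv)$ and to show that each event $\{\tilde\ellv(t) = \tilde\ellv\}$ can only be triggered by a concentration failure of the empirical means $\hat\rho$, whose total probability over the horizon sums to $\OO(\log\log T)$. The first step is a reduction. The leader is the output of $\ourargmax$ (Algorithm~\ref{alg:ourargmax}), i.e.\ the greedy maximiser of $\sum_k \hat\rho_{\tilde\av_k}(t)$. Under Assumption~\ref{hyp:strict} the same greedy procedure run on the true probabilities $\rho_{i,j} = \theta_i\theta_j$ reconstructs $\av^*$ couple by couple (it picks $\{1,2\}$, then $\{3,4\}$, etc., since products are maximised by the two largest remaining $\theta$'s), and the resulting ordered matching is the unique $\tilde\av^*$ by Lemma~\ref{lem:opt-uni}. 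Hence the event $\{\tilde\ellv(t)=\tilde\ellv\}$ with $\tilde\ellv \neq \tilde\av^*$ forces at least one \emph{inversion}: there exist couples $c = \{i,i'\}$ and $c' = \{j,j'\}$ with $\hat\rho_{c}(t) \geq \hat\rho_{c'}(t)$ while $\rho_{c} < \rho_{c'}$. Thus $\mathds{1}\{\tilde\ellv(t)=\tilde\ellv\}$ is dominated by a union, over finitely many couple pairs, of such inversion events.

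Second, I would couple the leader count $\Tilde{T}_{\tilde\ellv}(t)$ to the per-couple sample counts $T_{i,j}(t)$. By construction, whenever $\tilde\ellv$ is the leader the algorithm recommends the leader itself, $\NtR(\tilde\ellv)$, once every $2L-1$ leader iterations, and each such direct play yields a fresh Bernoulli draw on every couple of $\tilde\ellv$. Consequently, after $\tilde\ellv$ has been elected leader $n$ times, each couple $\{i,i'\} \in \NtR(\tilde\ellv)$ has been sampled at least $\lfloor n/(2L-1) \rfloor$ times, so the empirical means of $\tilde\ellv$'s couples concentrate at a rate governed by $n$. This periodic exploration is the mechanism that prevents a sub-optimal leader from surviving.

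Third, I would bound each inversion event by a deviation inequality for Bernoulli means. Writing $\tilde\Delta_\rho$ for the smallest gap $\rho_{c'} - \rho_c > 0$ over inverted pairs, which is strictly positive by Assumption~\ref{hyp:strict}, an inversion after $m$ samples has probability at most $\exp(-m\,\text{kl}(\cdot,\cdot))$, controlled through the strict-order gap. Summing these probabilities over the iterations $t \le T$, and using the coupling of the previous step to replace sample counts by a fraction of the leader count, yields a convergent contribution; the residual $\OO(\log\log T)$ term is inherited from the $\log(t) + 3\log(\log(t))$ confidence level used in the indices and from the peeling over the dyadic epochs of $t$, exactly as in the corresponding lemma of \cite{GRAB}.

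The main obstacle is the interplay between the two sides of an inversion. The periodic plays of $\NtR(\tilde\ellv)$ control the empirical means of $\tilde\ellv$'s own couples, but a greedy inversion may also involve a couple $c'$ that belongs to $\tilde\av^*$ or to some other matching and need not be sampled whenever $\tilde\ellv$ is the leader. I therefore expect the delicate part to be showing that the competing couple $c'$ is either sampled often enough for its empirical mean to concentrate, or else that an unsampled couple (for which $\hat\rho$ is set to $0$ by convention) can never win the greedy comparison, so that only couples with enough samples can create an inversion. Reconciling this with the self-normalised nature of the leader count, so that the sum over $t$ stays at $\OO(\log\log T)$ rather than blowing up, is the crux and is precisely where the argument of \cite{GRAB} must be transported to the monopartite ordered-matching graph.
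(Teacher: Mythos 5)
Your proposal is a plan rather than a proof, and the obstacle you flag in your last paragraph is fatal to the route you chose, not a technicality to be patched later. (For context: the paper itself defers the proof of this lemma to a complete version and only indicates that it follows GRAB's analysis, so the comparison here is with that indicated route.) Your reduction compares the leader's couple $c$ against the couple $c'$ that the greedy run on the true $\rho$ would pick, i.e.\ a couple of $\av^*$. But while $\tilde\ellv$ is the leader, \ouralgo{} only ever plays $\NtR(\tilde\ellv)$ and the matchings of $\Nc_{Set}(\tilde\ellv)$; a couple of $\av^*$ need not appear in any of these, so $T_{c'}(t)$ can remain at $0$ or $1$ indefinitely. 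Then $\hat\rho_{c'}(t)$ never concentrates, the inversion $\hat\rho_c(t) \geq \hat\rho_{c'}(t)$ has probability bounded away from zero at every iteration (indeed it holds deterministically when $c'$ is unsampled, since then $\hat\rho_{c'}(t)=0$ by convention), and your sum over $t$ diverges. Your suggested escape --- ``an unsampled couple can never win the greedy comparison'' --- points the wrong way: the bad event is not that $c'$ wins while unsampled, it is that the truly better $c'$ \emph{loses}, which is exactly what the convention $\hat\rho_{c'}=0$ produces. No concentration argument can close the bound on this reduction.

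The missing idea is to place the comparison witness \emph{inside the neighborhood}, which is precisely what the relaxed unimodality lemma provides. Since $\pi(\tilde\ellv)$ holds and $\tilde\ellv \neq \tilde\av^*$, the proof of that lemma exhibits adjacent couples $\{i,i'\}=\tilde\ellv_k$ and $\{j,j'\}=\tilde\ellv_{k+1}$ (with, w.l.o.g., $\theta_i\geq\theta_{i'}$, $\theta_j\geq\theta_{j'}$, $\theta_{i'}<\theta_j$, $\theta_i>\theta_{j'}$) such that $\NtR\left(\neigpermut{\tilde\ellv}{i'}{j}\right)$ has strictly larger expected reward; its couple $\{i,j\}$ satisfies $\rho_{i,j}>\rho_{i,i'}$, while the greedy election of $\tilde\ellv$ forces $\hat\rho_{i,i'}(t)\geq\hat\rho_{i,j}(t)$ because $i,i',j,j'$ are all still available at step $k$ of $\ourargmax$. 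Crucially, $\{i,j\}$ belongs to a matching of $\Nc_{Set}(\tilde\ellv)$, i.e.\ to the very arm set on which \ouralgo{} runs its optimistic step whenever $\tilde\ellv$ is the leader, so the algorithm's own exploration (infinite index when unsampled, KL index otherwise) is the mechanism that repairs an underestimate of $\hat\rho_{i,j}$. From there the argument is the standard leader lemma of OSUB/GRAB: split on $\hat\rho_{i,i'}(t)\geq\rho_{i,i'}+\delta$ versus $\hat\rho_{i,j}(t)\leq\rho_{i,j}-\delta$ with $2\delta=\rho_{i,j}-\rho_{i,i'}$; the first event is summable to $\OO(1)$ by your (correct) coupling of $T_{i,i'}(t)$ to the leader count via the forced play every $2L-1$ leader rounds; for the second, either the KL index of $\{i,j\}$ falls below $\rho_{i,j}$, an event of probability $\OO(1/(t\log t))$ per round whose sum over $t$ gives the $\OO(\log\log T)$ term, or the index is valid and the optimistic rule drives plays that resample $\{i,j\}$, and the expected number of plays of a couple while it is $\delta$-underestimated is $\OO(1/\delta^2)$. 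Your outline contains the coupling step and gestures at the index analysis, but without relocating the witness into $\Nc_{Set}(\tilde\ellv)$ neither of these tools can be brought to bear, so the proposal does not yet constitute a proof.
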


    Finally, the number of iterations where \ouralgo{} elects a leader that does not satisfy $\pi$ is controlled by Lemma~\ref{theo:badpi}.

    \begin{lemma}[Upper-Bound on the number of iterations where \ouralgo{} elect a leader that does not satisfy $\pi$]\label{theo:badpi}
        Under the hypothesis of Theorem~\ref{theo:main-v1} and using its notation,
        $$
            \EE[\sum^T_{t=1} \mathds{1}\{\neg \pi(\tilde\ellv(t))\}] = \OO(1).
        $$
    \end{lemma}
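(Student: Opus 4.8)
The plan is to exploit that $\ourargmax$ builds the leader greedily, so its couples are sorted by their \emph{empirical} success rate, and to argue that a violation of $\pi$ — which concerns the \emph{true} success rates — can only be caused by a deviation of some empirical mean from its expectation, an event whose total expected count over all iterations is bounded independently of $T$.

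First I would record the sortedness of the leader. By construction (Algorithm~\ref{alg:ourargmax}) the $k$-th couple of $\tilde\ellv(t)$ is the $\hat\rho$-maximiser among the elements left after the first $k-1$ couples have been removed; since at step $k+1$ the maximisation ranges over a subset of the elements available at step $k$, we get $\hat\rho_{\tilde\ellv_k}(t) \ge \hat\rho_{\tilde\ellv_{k+1}}(t)$ for every $k \in [L-1]$. Consequently, if $\neg\pi(\tilde\ellv(t))$ then there is an index $k$ with couples $c = \tilde\ellv_k$ and $c' = \tilde\ellv_{k+1}$ such that $\rho_{c} < \rho_{c'}$ while $\hat\rho_{c}(t) \ge \hat\rho_{c'}(t)$. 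Writing $\delta_{\min} := \min\{|\rho_c - \rho_{c'}| : \rho_c \neq \rho_{c'}\}$, which is strictly positive because there are only finitely many couples, a triangle-inequality argument shows that at least one of $|\hat\rho_{c}(t) - \rho_{c}|$ and $|\hat\rho_{c'}(t) - \rho_{c'}|$ exceeds $\delta_{\min}/2$. Hence $\neg\pi(\tilde\ellv(t))$ forces some couple present in $\tilde\ellv(t)$ to be mis-estimated, and a union bound over the at most $\binom{2L}{2}$ couples gives
\begin{equation*}
    \EE\Big[\sum_{t=1}^T \ind\{\neg\pi(\tilde\ellv(t))\}\Big]
    \le \sum_{c} \EE\Big[\sum_{t=1}^T \ind\{c \in \tilde\ellv(t),\ |\hat\rho_{c}(t) - \rho_{c}| \ge \tfrac{\delta_{\min}}{2}\}\Big].
\end{equation*}

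Next I would bound each inner term by re-indexing on the number of pulls. Fixing a couple $c$, the estimate $\hat\rho_{c}(t)$ is the empirical mean of the first $T_{c}(t)$ i.i.d.\ Bernoulli outcomes of $c$, so the deviation event depends on $t$ only through $n = T_{c}(t)$. The crucial combinatorial step is to show that the number of iterations $t$ with $c \in \tilde\ellv(t)$ and $T_{c}(t) = n$ is bounded by a constant $C$ independent of $T$. This rests on the forced-exploration schedule: the iterations with $T_{c}(t)=n$ form a time interval on which $c$ is played at most once (at its very end), so on this interval the leader matching $\NtR(\tilde\ellv(t))$ is essentially never played while $c \in \tilde\ellv(t)$; therefore every ordered matching containing $c$ that is elected leader on this interval does so with a counter $\Tilde{T}_{\tilde\ellv(t)}(t)$ that is (bar one exception) never a multiple of $2L-1$, which — the counter of a fixed leader taking consecutive integer values — can occur at most $\OO(L)$ times for that matching. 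As there are only finitely many ordered matchings containing $c$, this yields $C = \OO(1)$ in $T$.

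Finally, since $n = T_{c}(t)$ increases by one at each genuine pull of $c$, each pull-level is visited with a single fixed deviation status, and the bounded multiplicity $C$ lets me write
\begin{equation*}
    \EE\Big[\sum_{t=1}^T \ind\{c \in \tilde\ellv(t),\ |\hat\rho_{c}(t) - \rho_{c}| \ge \tfrac{\delta_{\min}}{2}\}\Big]
    \le C \sum_{n \ge 0} \PP\big(|\bar\rho_{c,n} - \rho_{c}| \ge \tfrac{\delta_{\min}}{2}\big),
\end{equation*}
where $\bar\rho_{c,n}$ is the mean of $n$ independent $\rho_c$-Bernoulli draws, and I would bound the summand by Hoeffding's (or the kl) inequality by $2e^{-n\delta_{\min}^2/2}$. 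The series converges to a constant depending only on $\delta_{\min}$, so each couple contributes $\OO(1)$ and, there being $\OO(L^2)$ couples, the total is $\OO(1)$ in $T$, as claimed. The main obstacle is the counting in the third paragraph: one must rule out the leader cycling forever through distinct orderings that all contain the mis-ranked couple without ever triggering the periodic replay of that couple, and it is exactly the finiteness of the set of ordered matchings combined with the modulo-$(2L-1)$ forced play that closes this gap.
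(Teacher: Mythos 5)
There is no in-paper proof to compare against here: the paper states Lemma~\ref{theo:badpi} (together with Lemma~\ref{theo:badleader}) and explicitly defers the proofs ``to a complete version of the paper,'' noting only that they parallel the corresponding lemmas of \cite{GRAB}. Judged on its own merits, your argument is correct and supplies exactly the missing ingredients. The three key steps all check out. (i) Because $\ourargmax$ maximizes $\hat\rho$ over a set of pairs that shrinks at each step, the leader's couples are indeed empirically sorted, $\hat\rho_{\tilde\ellv_k}(t)\ge\hat\rho_{\tilde\ellv_{k+1}}(t)$, so $\neg\pi(\tilde\ellv(t))$ forces an inversion between the empirical and true orders of two adjacent couples, hence a deviation of at least $\delta_{\min}/2$ for some couple of the leader; the union bound over the $\binom{2L}{2}$ couples is then immediate. (ii) The multiplicity bound is the delicate point, and your argument for it is sound: the iterations with $T_c(t)=n$ form a time interval, on which any fixed ordered matching containing $c$ is elected with \emph{consecutive} values of its counter $\Tilde{T}$; since a counter value divisible by $2L-1$ triggers a play of the whole leader matching and hence of $c$ (which ends the interval), each such ordered matching can be elected at most $2L-1$ times there, and multiplying by the finite, $T$-independent number of ordered matchings containing $c$ gives your constant $C$. (iii) The re-indexing by the pull count $n$ and the convergent series $\sum_{n}2e^{-n\delta_{\min}^2/2}$ then give an $\OO(1)$ bound per couple.

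Two points deserve polish. First, treat $n=0$ separately: with the paper's convention $\hat\rho_c(t)=0$ when $T_c(t)=0$, the deviation indicator at that pull level can be identically $1$; it contributes at most $C$ and does not affect convergence. Second, your constant $C$ is of order $(2L-1)\cdot L\cdot(2L-2)!/2^{L-1}$, i.e.\ super-exponential in $L$; this is admissible because the lemma's $\OO(1)$ is only with respect to $T$, but you should state explicitly that the hidden constant depends on $L$ and $\delta_{\min}$ (a finer accounting would be needed if one wanted a constant polynomial in $L$).
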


    From these results, we are able to prove Theorem~\ref{theo:main-v1}.

    \begin{proof}[Proof of Theorem~\ref{theo:main-v1}]
        The proof is based on a decomposition of the set of the iterations $[T]$. The goal is to apply the previous results to make the proof easier.
        \begin{align*}
            \displaystyle
            [T] = \bigcup_{\av\in\{\av^*\}\cup\Nc_\NtR(\tilde\av^*)} \{t\in[T]:\tilde\ellv(t) = \tilde\av^*, \av(t)=\av\} \\
            \cup \{t\in[T]:\tilde\ellv(t) \neq \tilde\av^*, \pi(\tilde\ellv(t))\} \cup \{t\in[T]:\neg\pi(\tilde\ellv(t))\}
        \end{align*}

        As for any recommendation $\av$, $\Delta_\av \leq K_\av$, this decomposition gives us the following inequality:
        $$\displaystyle
            R(T) \leq \sum_{\av\in\Nc_\NtR(\tilde\av^*)} \Delta_\av A_\av + KB + KC
        $$
        with
        \begin{equation*}
            \begin{split}
                & A_\av = \EE[\sum^T_{t=1} \mathds{1}\{\tilde\ellv(t) = \tilde\av^*, \av(t)=\av\}],\\
                & B = \EE[\sum^T_{t=1} \mathds{1}\{\tilde\ellv(t) \neq \tilde\av^*, \pi(\tilde\ellv(t))\}],\\
                & C = \EE[\sum^T_{t=1} \mathds{1}\{\neg\pi(\tilde\ellv(t))\}].
            \end{split}
        \end{equation*}

        The terme $A_\av$ is smaller than the number of time the arm $\av$ is chosen by KL-CombUCB playing on the set of arms $\{\tilde\av^*\} \cup \Nc_\NtR(\tilde\av^*)$.
        As any arms differs with $\tilde\av^*$ at at most two positions, Theorem~\ref{theo:goodleader} upper-bounds $A_\av$ by
        $$
            \frac{8}{\Delta^2_\av}\log T + \OO(\log\log T)
        $$
        and thus, as $|\Nc_\NtR(\tilde\av^*)| = 2L-2$, $\sum_{\av\in\Nc_\NtR(\tilde\av^*)} \Delta_\av A_\av = \OO(\frac{L}{\Delta}\log T)$.

        By Lemma~\ref{theo:badleader}, we upper-bound the terme $B$ by
        $$
            B = \OO(\textnormal{log log} T),
        $$
        and by Lemma~\ref{theo:badpi}, we upper-bound the terme $C$ by
        $$
            C = \OO(1),
        $$

        The expected regret of \ouralgo{} is finally the sum of those three terms, which end the proof.
    \end{proof}
    % ? theroem v1 détaillé ?

    \begin{figure}[t]
        \centering%
        \includegraphics[width=0.7\linewidth]{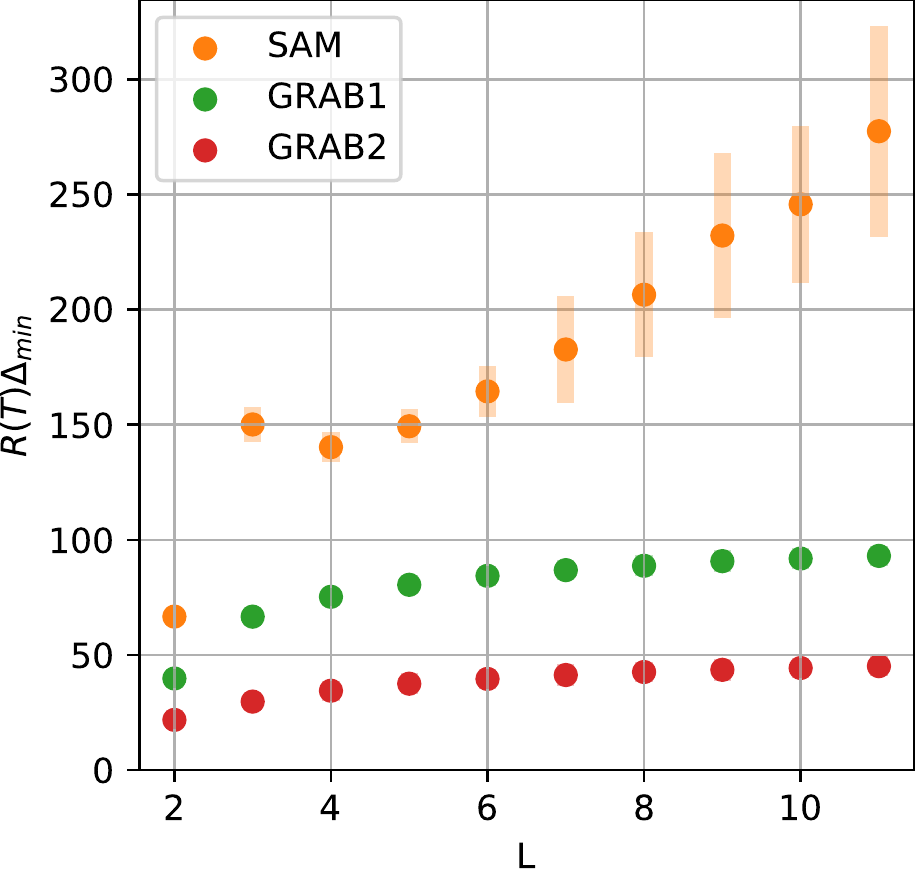}
        \caption{Normalized regret comparison of SAM to \ouralgova{} and \ouralgovb{} versus $L$ in the settings of the experience 1.}
        \label{fig:L}
    \end{figure}

\begin{figure*}[t]
        \centering%
        \begin{subfigure}[b]{0.3\linewidth}
            \centering
            \includegraphics[width=\linewidth]{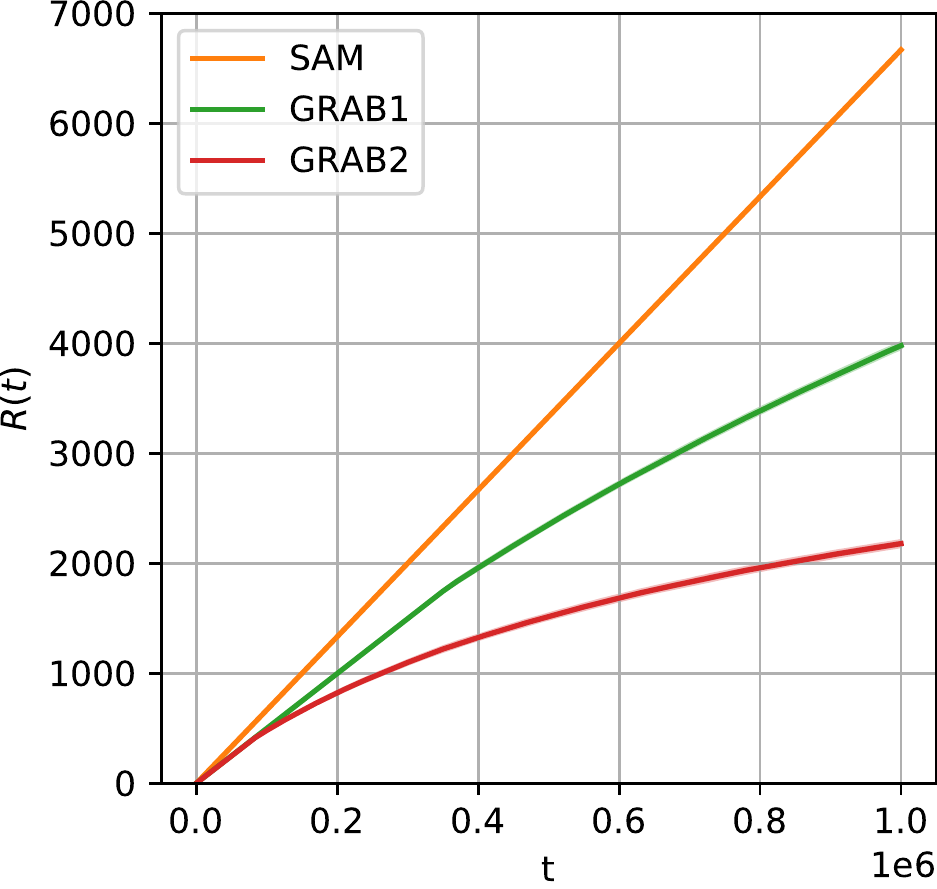}
            \caption{$L=2$}
            \label{fig:T_muNone_L2}
        \end{subfigure}%
        \hfill%
        \begin{subfigure}[b]{0.3\linewidth}
            \centering
            \includegraphics[width=\linewidth]{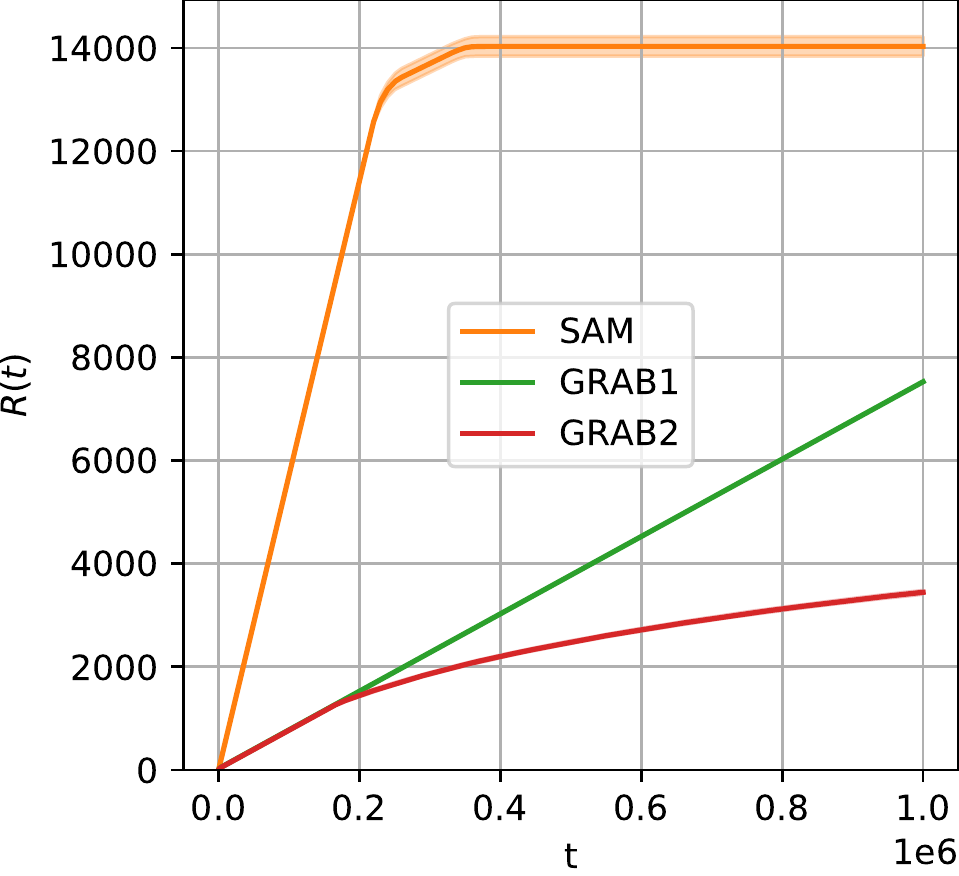}
            \caption{$L=4$}
            \label{fig:T_muNone_L4}
        \end{subfigure}%
        \hfill%
        \begin{subfigure}[b]{0.3\linewidth}
            \centering
            \includegraphics[width=\linewidth]{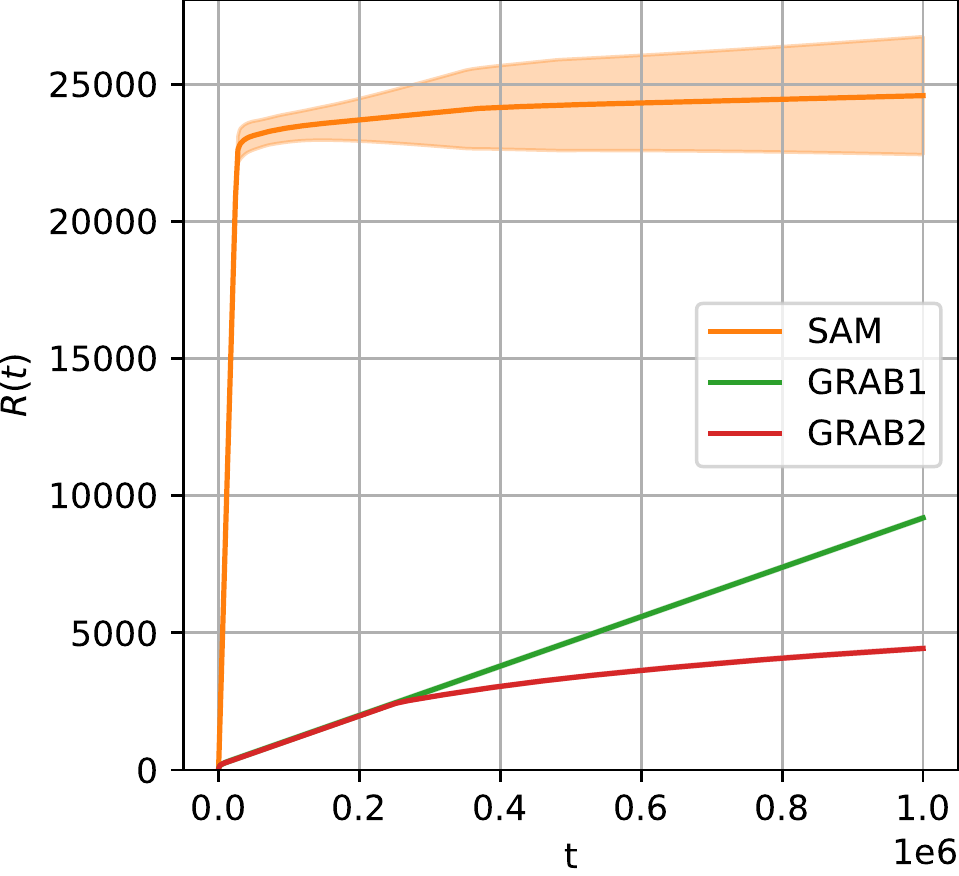}
            \caption{$L=10$}
            \label{fig:T_muNone_L10}
        \end{subfigure}
        \caption{Regret comparison of SAM to \ouralgova{} and \ouralgovb{} versus iteration $t$}
        \label{fig:T_muNone}
    \end{figure*}

    \begin{figure}[t]
        \centering%
        \includegraphics[width=0.7\linewidth]{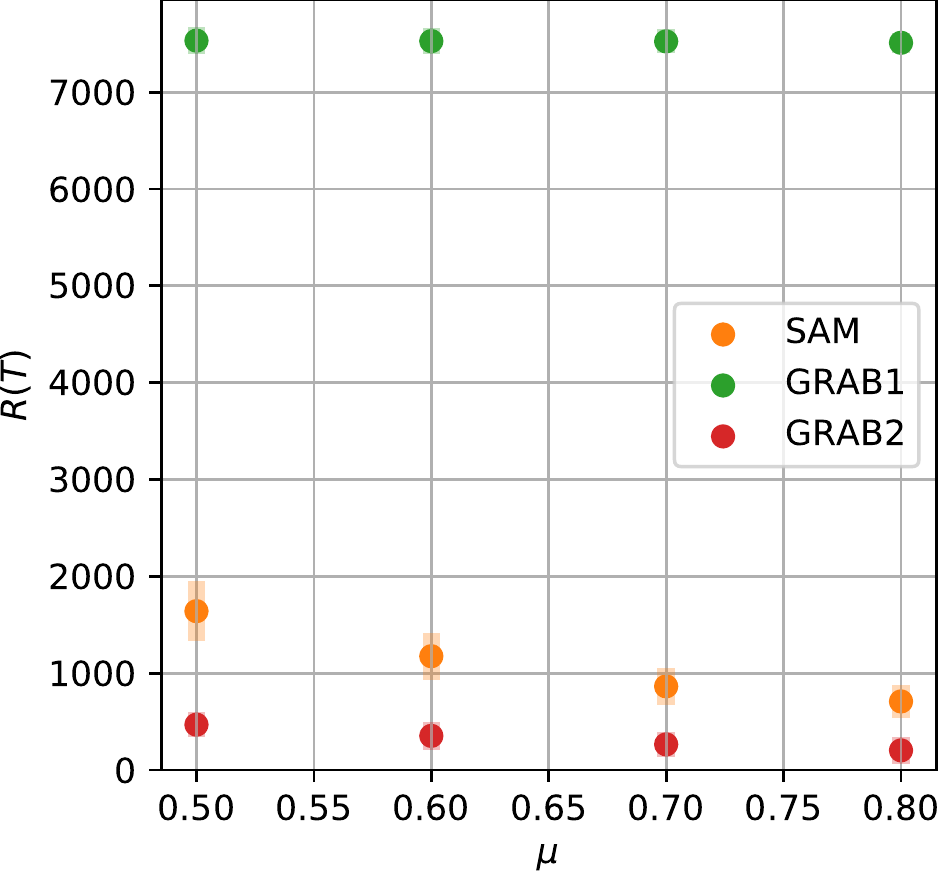}
        \caption{Regret comparison of SAM to \ouralgova{} and \ouralgovb{} versus $\mu$ in the settings of the experience 2.}
        \label{fig:mu}
    \end{figure}

    \begin{figure}[t]
        \centering%
        \includegraphics[width=0.7\linewidth]{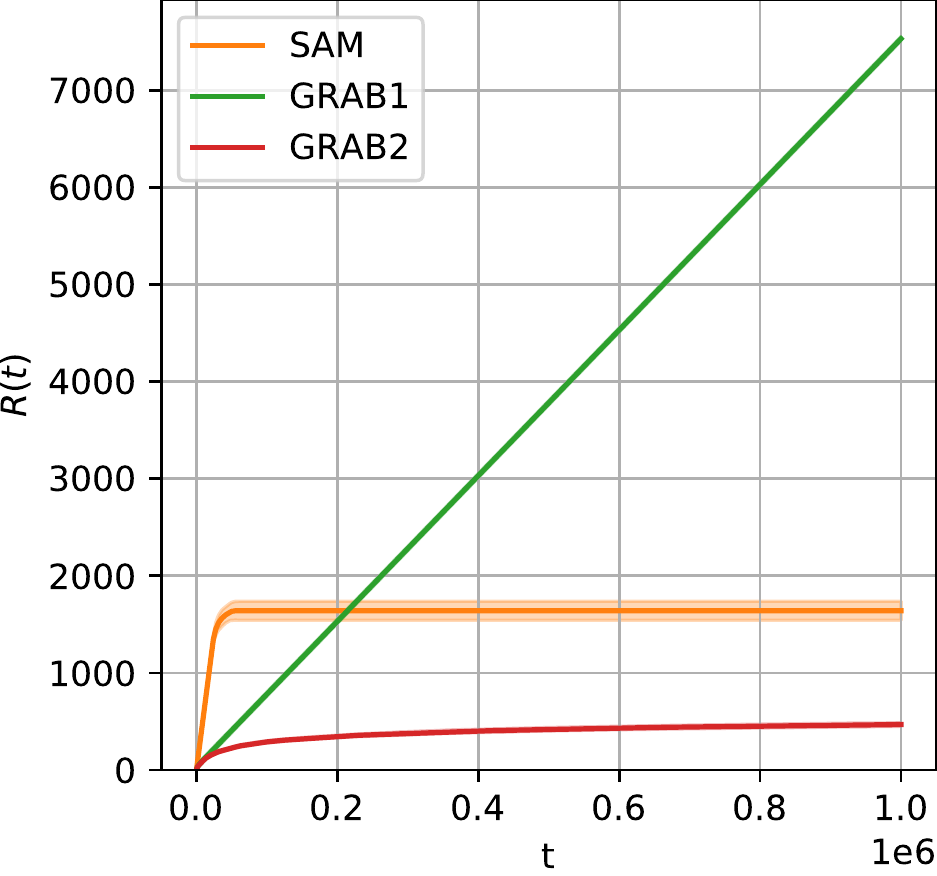}
        \caption{Regret evolution comparison of SAM to \ouralgova{} and \ouralgovb{} among iterations in the settings of the experience 2 and with $\mu = 0.5$.}
        \label{fig:T_mu0.5_L4}
    \end{figure}

    \section{Experimental Results}
    \label{seq:xp results}

    In this section, we compare empirically \ouralgova{} and \ouralgovb{} to SAM \cite{ADMA}. To evaluate the performances, we use simulated data over two experiments with settings similar to the ones used in \cite{ADMA}. Each figure compares the three algorithms when varying the parameters of these settings:
    \begin{itemize}
        \item $L$, the number of couples to recommend (thus, there are $2L$ players);
        \item $T$, the horizon of the simulations;
        \item $\delta$, the gap between each player of consecutive couples in the optimal matching, it can be interpreted as $\delta = \sqrt{\Delta}$;
        \item $\mu$, the mean of all $\theta_i$.
    \end{itemize}

    \begin{remark}[\ouralgo{} simplification]
        For the sake of fairness, our implementation of GRAB use the same optimistic formula than SAM: 
        $$
            q_{i, j}(t) \stackrel{def}{=} \hat\rho_{i, j}(t) + \sqrt{\frac{2\log(t)}{T_{i, j}(t)}},
        $$
        instead of $f(\hat\rho,s,t)$.
    \end{remark}

        In the first experiment, the problems are instantiated with a tuple $(L,\delta)$ such that $0 < (L-1)\delta \leq 1$. We fluctuate $L$ between $2$ and $11$, while $\delta$ and $T$ are respectively fixed to $0.1$ and $10^6$. Finally, we construct a set of $2L$ players whose inner probability is, for each $i \in [L]$, $\theta_{2i-1} = \theta_{2i} = (L-i)\delta$, thus $\mu = \frac{L-1}{2}\delta$. As a result, shown in Figure~\ref{fig:L}, not only we scale, as established, as $\frac{L}{\Delta}$, but we also have a normalized regret divided by a factor $2$ for \ouralgova{} and more than $5$ for \ouralgovb{} w.r.t. the normalized regret of SAM. In Figure~\ref{fig:T_muNone}, we focus on the expected regret evolution along iterations. The difference of strategy between SAM and our algorithms become clear: SAM explores as much as it can at the beginning, and when the algorithm is able to set an order between players with high probability, it divides them in groups, setting a partial order and defining its optimal matching. Meanwhile, our algorithms takes advantage of every information they have to immediately choose better matchings and to limit the comparison between players to as few couples as possible. The early decision leads to a lower regret at first iterations, and the smaller set of couples to evaluate keep the regret low thereafter.

    In the second experiment, we add $\mu$ to the tuple, therefore problems are instantiated by a tuple $(L, \mu, \delta)$ such that $0 \leq \mu - (L-1)\delta$ and $\mu + (L+1)\delta \leq 1$. We then fluctuate $\mu$ between $0.5$ and $0.8$, while $L$, $\delta$ and $T$ are respectively fixed to $4$, $0.1$ and $10^6$. Although it is disadvantageous for \ouralgova{} (as we see in Figure~\ref{fig:L}, others value of $\theta$ would have make it better than SAM), we decided to take this setting for the sake of comparison, since SAM did use this setting in its experimental evaluation. As we can see in Figure~\ref{fig:mu}, results show that \ouralgovb{} has still a lower regret than SAM. Figure~\ref{fig:T_mu0.5_L4} focuses on the case $\mu = 0.5$ and shows the regret evolution over time, highlighting, as Figure~\ref{fig:T_muNone}, the difference of strategy between algorithms. This figure also explains the lack of efficiency of \ouralgova{} by showing a linear regret, which means that it has not the time to learn properly the partial order between arms.

    \section{Conclusion}

    We tackle the problem of finding an optimal matching in a monopartite weighted graph. In  order to solve this problem, we define and use a graph made of all the possible matchings, and use a property of unimodality over the expected reward of each matching. Our algorithms learn online a partial order between each player, by taking advantage of a semi-bandit feedback at each iteration, and give regret upper-bound in $\OO(\frac{L}{\Delta}\log T)$ and $\OO(\frac{L\Delta}{\tilde\Delta^2}\log T)$, reducing at least by a factor $\log L$ the bound w.r.t. the state-of-the-art algorithms. Moreover, unlike SAM, \ouralgo{} does not need $T$ as an initial information.

\bibliography{bib.bib}

\begin{thebibliography}{17}
\providecommand{\natexlab}[1]{#1}
\providecommand{\url}[1]{\texttt{#1}}
\expandafter\ifx\csname urlstyle\endcsname\relax
  \providecommand{\doi}[1]{doi: #1}\else
  \providecommand{\doi}{doi: \begingroup \urlstyle{rm}\Url}\fi

\bibitem[Cesa-Bianchi \& Lugosi(2012)Cesa-Bianchi and
  Lugosi]{cesa2012combinatorial}
Cesa-Bianchi, N. and Lugosi, G.
\newblock Combinatorial bandits.
\newblock \emph{Journal of Computer and System Sciences}, 78\penalty0
  (5):\penalty0 1404--1422, 2012.

\bibitem[Combes \& Proutière(2014)Combes and Proutière]{Combes2014}
Combes, R. and Proutière, A.
\newblock Unimodal bandits: Regret lower bounds and optimal algorithms.
\newblock In \emph{proc. of the 31st Int. Conf. on Machine Learning}, ICML'14,
  2014.

\bibitem[Combes et~al.(2015)Combes, Talebi Mazraeh~Shahi, Proutiere,
  et~al.]{combes2015combinatorial}
Combes, R., Talebi Mazraeh~Shahi, M.~S., Proutiere, A., et~al.
\newblock Combinatorial bandits revisited.
\newblock \emph{Advances in neural information processing systems}, 28, 2015.

\bibitem[Cope(2009)]{Cope2009}
Cope, E.~W.
\newblock Regret and convergence bounds for a class of continuum-armed bandit
  problems.
\newblock \emph{IEEE Transactions on Automatic Control}, 54\penalty0
  (6):\penalty0 1243--1253, 2009.
\newblock \doi{10.1109/TAC.2009.2019797}.

\bibitem[Cuvelier et~al.(2021)Cuvelier, Combes, and
  Gourdin]{cuvelier2021statistically}
Cuvelier, T., Combes, R., and Gourdin, E.
\newblock Statistically efficient, polynomial-time algorithms for combinatorial
  semi-bandits.
\newblock \emph{Proceedings of the ACM on Measurement and Analysis of Computing
  Systems}, 5\penalty0 (1):\penalty0 1--31, 2021.

\bibitem[Degenne \& Perchet(2016)Degenne and Perchet]{degenne2016combinatorial}
Degenne, R. and Perchet, V.
\newblock Combinatorial semi-bandit with known covariance.
\newblock \emph{Advances in Neural Information Processing Systems}, 29, 2016.

\bibitem[Gauthier et~al.(2021)Gauthier, Gaudel, Fromont, and Lompo]{GRAB}
Gauthier, C.-S., Gaudel, R., Fromont, E., and Lompo, B.~A.
\newblock Parametric graph for unimodal ranking bandit.
\newblock In \emph{Proc. of the 38th Int. Conf. on Machine Learning}, ICML'21,
  pp.\  3630--3639, 2021.

\bibitem[Gauthier et~al.(2022)Gauthier, Gaudel, and Fromont]{UniRank}
Gauthier, C.-S., Gaudel, R., and Fromont, E.
\newblock Unirank: Unimodal bandit algorithms for online ranking.
\newblock In \emph{Proc. of the 39th Int. Conf. on Machine Learning}, ICML'22,
  2022.

\bibitem[Komiyama et~al.(2017)Komiyama, Honda, and Takeda]{Komiyama2017}
Komiyama, J., Honda, J., and Takeda, A.
\newblock Position-based multiple-play bandit problem with unknown position
  bias.
\newblock In \emph{Advances in Neural Information Processing Systems 30},
  NIPS'17, 2017.

\bibitem[Lov{\'a}sz \& Plummer(2009)Lov{\'a}sz and Plummer]{lovasz2009matching}
Lov{\'a}sz, L. and Plummer, M.~D.
\newblock \emph{Matching theory}, volume 367.
\newblock American Mathematical Soc., 2009.

\bibitem[Mehta(2012)]{mehta2012online}
Mehta, A.
\newblock Online matching and ad allocation.
\newblock \emph{Theoretical Computer Science}, 8\penalty0 (4):\penalty0
  265--368, 2012.

\bibitem[Perrault et~al.(2020)Perrault, Boursier, Valko, and
  Perchet]{perrault2020statistical}
Perrault, P., Boursier, E., Valko, M., and Perchet, V.
\newblock Statistical efficiency of thompson sampling for combinatorial
  semi-bandits.
\newblock \emph{Advances in Neural Information Processing Systems},
  33:\penalty0 5429--5440, 2020.

\bibitem[Roth et~al.(2004)Roth, S{\"o}nmez, and {\"U}nver]{roth2004kidney}
Roth, A.~E., S{\"o}nmez, T., and {\"U}nver, M.~U.
\newblock Kidney exchange.
\newblock \emph{The Quarterly journal of economics}, 119\penalty0 (2):\penalty0
  457--488, 2004.

\bibitem[Sentenac et~al.(2021)Sentenac, Yi, Calauzènes, Perchet, and
  Vojnovic]{ADMA}
Sentenac, F., Yi, J., Calauzènes, C., Perchet, V., and Vojnovic, M.
\newblock Pure exploration and regret minimization in matching bandits.
\newblock In \emph{Proc. of the 38th Int. Conf. on Machine Learning}, ICML'21,
  2021.

\bibitem[Wang \& Chen(2018)Wang and Chen]{wang2018thompson}
Wang, S. and Chen, W.
\newblock Thompson sampling for combinatorial semi-bandits.
\newblock In \emph{International Conference on Machine Learning}, pp.\
  5114--5122. PMLR, 2018.

\bibitem[Wheaton(1990)]{wheaton1990vacancy}
Wheaton, W.~C.
\newblock Vacancy, search, and prices in a housing market matching model.
\newblock \emph{Journal of political Economy}, 98\penalty0 (6):\penalty0
  1270--1292, 1990.

\bibitem[Yu \& Mannor(2011)Yu and Mannor]{Yu2011}
Yu, J.~Y. and Mannor, S.
\newblock Unimodal bandits.
\newblock In \emph{proc. of the 28th Int. Conf. on Machine Learning}, ICML'11,
  2011.

\end{thebibliography}
\bibliographystyle{icml2022}

%\appendix
%\onecolumn
%\input{supplementary_content.tex}

\end{document}